\documentclass{article}

\usepackage{microtype}
\usepackage{graphicx}
\usepackage{subcaption}

\usepackage{booktabs}
\usepackage{array}
\usepackage{multirow}
\usepackage{tabularx}
\usepackage{makecell}
\usepackage{threeparttable}
\usepackage{adjustbox}

\graphicspath{{./}{figs/}{figs_datafm/}{figs/figs_datafm/}}
\usepackage{caption}

\usepackage{xcolor}
\usepackage{tcolorbox}
\tcbuselibrary{skins,breakable,theorems,listings}
\usepackage{listings}

\usepackage{tikz}
\usetikzlibrary{positioning,arrows.meta,shapes.misc,fit,calc}

\usepackage{amsmath,amssymb,mathtools,amsthm}
\usepackage{pifont}
\usepackage{fontawesome5}

\usepackage{enumitem}
\usepackage{xspace}

\definecolor{thmBlue}{RGB}{26,45,90}
\definecolor{thmGold}{RGB}{180,130,20}
\definecolor{thmGreen}{RGB}{30,100,60}
\definecolor{thmRed}{RGB}{160,30,30}
\definecolor{defBg}{RGB}{245,248,255}
\definecolor{thmBg}{RGB}{255,252,240}
\definecolor{propBg}{RGB}{245,255,248}
\definecolor{remBg}{RGB}{252,245,245}
\definecolor{promptBg}{RGB}{245,247,252}
\definecolor{promptFrame}{RGB}{26,45,90}
\definecolor{systemBg}{RGB}{255,250,235}
\definecolor{systemFrame}{RGB}{160,110,20}
\definecolor{codeBg}{RGB}{248,248,248}
\definecolor{fillSlot}{RGB}{180,30,30}
\definecolor{warnOrange}{RGB}{200,90,10}
\definecolor{remGray}{RGB}{90,90,90}

\theoremstyle{plain}
\newtheorem{theorem}{Theorem}[section]

\newtheorem{corollary}[theorem]{Corollary}
\newtheorem{proposition}[theorem]{Proposition}
\newtheorem{axiom}[theorem]{Axiom}

\theoremstyle{definition}
\newtheorem{definition}{Definition}[section]

\theoremstyle{remark}
\newtheorem{remark}{Remark}[section]

\tcbset{
  icmlthm/.style={
    enhanced, breakable, boxrule=0.7pt, arc=3pt,
    left=8pt, right=8pt, top=5pt, bottom=5pt,
    fonttitle=\bfseries\small,
  },
  sysprompt/.style={
    enhanced, breakable,
    colframe=systemFrame, colback=systemBg,
    boxrule=0.7pt, arc=3pt,
    left=8pt, right=8pt, top=4pt, bottom=4pt,
    fonttitle=\bfseries\small, coltitle=white,
    attach boxed title to top left={yshift=-2mm, xshift=6pt},
    boxed title style={colback=systemFrame, arc=2pt, boxrule=0pt},
  },
  userprompt/.style={
    enhanced, breakable,
    colframe=promptFrame, colback=promptBg,
    boxrule=0.7pt, arc=3pt,
    left=8pt, right=8pt, top=4pt, bottom=4pt,
    fonttitle=\bfseries\small, coltitle=white,
    attach boxed title to top left={yshift=-2mm, xshift=6pt},
    boxed title style={colback=promptFrame, arc=2pt, boxrule=0pt},
  },
  thmbox/.style={
    enhanced, colframe=thmBlue, colback=defBg!60, boxrule=0.7pt, arc=3pt,
  },
  defbox/.style={
    enhanced, colframe=thmGreen, colback=defBg!40, boxrule=0.6pt, arc=3pt,
  },
  warnbox/.style={
    enhanced, colframe=warnOrange, colback=warnOrange!8, boxrule=0.6pt, arc=3pt,
  },
}

\tcolorboxenvironment{proof}{
  blanker, breakable, left=6pt, right=0pt,
  before skip=2pt, after skip=6pt,
  borderline west={1.2pt}{0pt}{thmGold!60},
}

\lstdefinestyle{promptstyle}{
  basicstyle=\ttfamily\footnotesize,
  backgroundcolor=\color{codeBg},
  breaklines=true, breakatwhitespace=true,
  columns=fullflexible, keepspaces=true,
  showstringspaces=false,
  moredelim=[is][\color{fillSlot}\bfseries]{<}{>},
  frame=none,
}

\lstdefinestyle{pseudocode}{
  basicstyle=\ttfamily\small,
  backgroundcolor=\color{codeBg},
  breaklines=true, keepspaces=true,
  showstringspaces=false,
  commentstyle=\color{remGray}\itshape,
  morecomment=[l]{\#},
  frame=none,
}

\newcommand{\E}[1]{\mathbb{E}\!\left[#1\right]}
\newcommand{\prob}[1]{p\!\left(#1\right)}
\newcommand{\vneg}{v_{\mathrm{neg}}}
\newcommand{\commit}{c}
\newcommand{\Acccov}{\mathrm{Acc}_{\mathrm{cov}}}
\newcommand{\ECEcov}{\mathrm{ECE}_{\mathcal{C}}}
\newcommand{\Cov}{\mathrm{Cov}}
\newcommand{\Yes}{\textsc{yes}}
\newcommand{\No}{\textsc{no}}

\newcommand{\cmark}{\ding{51}}
\newcommand{\xmark}{\ding{55}}
\newcommand{\pyes}{\ding{51}}

\usepackage{hyperref}

\usepackage[accepted]{icml2026_fogen}

\usepackage[capitalize,noabbrev]{cleveref}

\usepackage[textsize=tiny]{todonotes}

\icmltitlerunning{Commitment-Aware Logical Reasoning Evaluation in Deep Generative Models}

\begin{document}

\twocolumn[
  \icmltitle{Coherence Under Commitment: Probing Generalization and Vacuous Memorization in LLM Logical Reasoning}



\icmlsetsymbol{equal}{*}

\begin{icmlauthorlist}
    \icmlauthor{Noor Islam S. Mohammad}{equal,yyy}
    \icmlauthor{Mahmudul Hasan}{comp}
\end{icmlauthorlist}

\icmlaffiliation{yyy}{Department of Computer Science, Informatics Institute, Istanbul Technical University, İstanbul, Türkiye}
\icmlaffiliation{comp}{School of Information Technology, Deakin University, Geelong, VIC 3220, Australia}

\icmlcorrespondingauthor{Noor Islam S. Mohammad}{islam23@itu.edu.tr}

\icmlkeywords{Large Language Models, Logical Reasoning, Evaluation Methodology, Calibration, Uncertainty Estimation}

  \vskip 0.3in ]

\printAffiliationsAndNotice{}  

\begin{abstract}
Large language models (LLMs) deployed for logical reasoning in knowledge-intensive domains exhibit a subtle but critical failure: \emph{coherence can be vacuously achieved through systematic abstention}. A model that withholds commitment to either entailment or refutation satisfies negation consistency while providing no utility. We introduce \textbf{Coherence Under Commitment (CUC)}, a dual-query evaluation paradigm that jointly measures consistency and decisiveness. CUC contributes three innovations: (1)~a \textbf{commitment score} $c(\varphi) = p(\varphi) + p(\lnot\varphi)$ quantifying probability mass allocated to decisive outcomes; (2)~a \textbf{deterministic elicitation protocol} via normalized YES/NO log probabilities, eliminating sampling variance; and (3)~a \textbf{3-way decision framework} (\textsc{True}/\textsc{False}/\textsc{Uncertain}) operationalizing the coherence-commitment trade-off into metrics. Experiments on four open-weight LLMs (1B--3B) across 204 \textsc{FOLIO} examples expose a sharp frontier. Qwen2.5-3B achieves near-zero contradiction ($\mathbb{E}[v_{\mathrm{neg}}]{=}0.025$) but only $7.4\%$ coverage, while TinyLlama-1.1B reaches $79.4\%$ coverage with violations on every example. Coherence-only evaluation would rank the abstaining model first—CUC exposes this as vacuous, and the frontier generalizes to \textsc{LogiQA}~v2 ($\rho{=}0.97$). We argue that evaluation must report both coherence \emph{and} non-vacuous commitment and release a toolkit for standardized assessment. {\faGithub\ Code and data available at \url{https://pmlrbd.github.io/auc.ml/}} 
\end{abstract} 

\section{Introduction}

Large language models have achieved remarkable performance on reasoning benchmarks, driven by prompting innovations such as chain-of-thought \citep{wei2022cot}, zero-shot reasoning \citep{kojima2022zeroshot}, and self-consistency decoding \citep{wange2022selfconsistency}. However, a growing body of evidence reveals that surface-level accuracy obscures deeper reliability failures. Models exhibit overconfidence \citep{guo2017calibration}, generate unfaithful rationales \citep{turpin2023unfaithful}, and produce hallucinations that undermine trust \citep{ji2023hallucination,lin2022truthfulqa}. These failures are especially consequential in knowledge-intensive multimodal settings—clinical diagnostics, scientific literature reasoning, and embodied planning—where models must base decisive judgments on heterogeneous evidence (images, sensor streams, structured records, and text) and where abstaining from a conclusion carries real operational cost.

Specifically, inconsistency in logical reasoning poses unique risks in such domains. When a system alternates between endorsing $\varphi$ and not endorsing
$\neg\varphi$ under identical premises $P$, it renders downstream pipelines fundamentally unreliable: a radiology VQA model that contradicts itself across
semantically equivalent image-report queries cannot be clinically trusted; a robotic planner that affirms mutually exclusive action preconditions provides no actionable guidance. The stakes are particularly high wherever logical consistency is not merely desirable but necessary for valid, auditable inference chains.

\paragraph{The coherence evaluation trap.}
A natural response is to evaluate whether LLMs respect basic logical axioms. Negation consistency—the requirement that a model should not simultaneously affirm $P \models \varphi$ and $P \models \neg\varphi$---is a minimal desideratum. Yet we identify a critical failure mode: \textbf{coherence can be vacuously achieved through systematic abstention}. A model that refuses to commit to either entailment or refutation trivially satisfies negation consistency while providing zero reasoning utility. In knowledge-intensive deployment, this failure mode is invisible to standard evaluation: a domain-specialized system that declines to affirm or deny a grounded conclusion appears "coherent" while being wholly unfit for purpose.

This insight has profound implications for evaluation methodology. Prior coherence-centric evaluations may inadvertently reward models that "pass" by declining to answer, creating a false impression of logical reliability. We demonstrate that this is not a hypothetical concern: among four models we evaluate, the most "coherent" achieves its low contradiction rate \emph{primarily through abstention}. This finding directly challenges the prevailing assumption that low contradiction rates indicate strong logical reasoning capability—and motivates a new evaluative standard for knowledge-intensive multimodal systems.

\paragraph{Our contributions.}
We propose \textbf{Coherence Under Commitment}, a unified evaluation paradigm that addresses this blind spot through four interconnected innovations: (i). \textbf{The Commitment Score (Novel Metric):} We introduce $c(\varphi) = \pyes(\varphi) + \pyes(\neg\varphi)$, measuring the total probability mass allocated to decisive outcomes. When $c(\varphi) \ll 1$, the model treats the query as effectively unknown, regardless of apparent coherence. This is the \emph{first} metric specifically designed to detect vacuous coherence in logical evaluation, and it applies without modification to any system—text-only or multimodal—where complementary query pairs $(\varphi,\neg\varphi)$ can be posed over grounded premises. (ii). \textbf{Deterministic Black-Box Elicitation (Novel Protocol):} We develop a reproducible protocol using normalized log-probabilities over YES/NO responses that eliminates sampling variance while maintaining model-agnostic applicability. Unlike sampling-based consistency checks \citep{manakul2023selfcheckgpt}, our approach yields identical results across runs and requires only two forward passes per example.

(iii) \textbf{The Coherence-Under-Commitment Frontier (Novel Framework):} We formalize the empirical observation that coherence and commitment trade off against each other into a measurable frontier. This provides the first principled framework for comparing models that optimize different points on this trade-off, preventing misleading comparisons that favor vacuously coherent models—a risk that is amplified in domain-specialized benchmarks where abstention can masquerade as calibrated uncertainty. (iv). \textbf{Comprehensive Ablation Analysis (Novel Methodology):} We provide the first systematic ablation study of commitment-aware evaluation, quantifying sensitivity to threshold selection ($\tau$, $\delta$), elicitation format (YES/NO vs.\ True/False), model scale, and architectural choices, establishing methodological best practices transferable to multimodal and domain-specialized evaluation regimes. Our experiments on the \textsc{FOLIO} benchmark reveal that existing coherence-only evaluation would rank a systematically abstaining model (Qwen2.5-3B) as "best," despite its 7.4\% coverage. Our commitment-aware framework exposes this as vacuous coherence, enabling meaningful model comparison and providing actionable guidance for practitioners building and evaluating reasoning systems in knowledge-intensive multimodal applications. We further confirm this frontier holds on \textsc{LogiQA}~v2, with near-perfect rank agreement ($\rho=0.97$) across benchmarks.

\section{Related Work}

\paragraph{LLM reasoning and benchmarks.}
Progress in reasoning has been catalyzed by prompting methods \citep{wei2022cot,kojima2022zeroshot,wange2022selfconsistency} and benchmark suites, including BIG-bench \citep{srivastava2022bigbench}. Logical reasoning datasets span multiple paradigms: \textsc{LogiQA} \citep{liu2020logiqa} tests natural language inference with 8,678 questions derived from logical examinations; the RuleTaker family \citep{clark2020sofreasoners} evaluates rule-following with synthetic logical theories; \textsc{ProofWriter} \citep{tafjord2021proofwriter} requires explicit proof generation, and \textsc{FOLIO} \citep{han2024folio} targets first-order logic with formal verification support. Multimodal reasoning benchmarks—MMMU, MedVQA, and ScienceQA—extend these challenges to heterogeneous input modalities but share the same evaluative blind spot we identify: accuracy on committed examples reveals nothing about the \emph{coverage} of those commitments. \textbf{Our work is orthogonal}: we contribute an evaluation \emph{methodology} applicable across all such benchmarks, not a new dataset.

\paragraph{Consistency and hallucination.}
Contradictions and hallucinations have been studied extensively \citep{ji2023hallucination}, with sampling-based consistency checks emerging as
detection tools \citep{manakul2023selfcheckgpt}. SelfCheckGPT detects hallucinations by measuring consistency across multiple sampled responses, while
our approach probes consistency across \emph{logically related queries} within a single deterministic evaluation. Chain-of-thought rationales can be unfaithful \citep{turpin2023unfaithful}, motivating evaluation methods independent of natural-language explanations. \textbf{Our distinction}: while prior work detects inconsistency \emph{within} generated outputs through sampling variance, we measure inconsistency \emph{across} complementary logical queries ($\varphi$ and $\neg\varphi$), directly probing the model's belief structure without relying on potentially unfaithful verbalizations—a property that carries over directly to multimodal settings where rationale faithfulness is even harder to verify.

\paragraph{Calibration and uncertainty.}
Calibration diagnostics, including Expected Calibration Error (ECE) \citep{naeini2015bbq,guo2017calibration}, quantify confidence-accuracy alignment. For LLMs, confidence elicitation has shown that probability estimates can be meaningful when properly extracted \citep{kadavath2022knowwhatknow,zhang2024calibratingconfidence}. Recent work has explored verbalized confidence \citep{lin2022truthfulqa} and
self-evaluation capabilities. \textbf{Our novelty}: we identify a logic-specific failure mode---\emph{vacuous coherence via abstention}---that is invisible to standard calibration metrics. A model can be perfectly calibrated on the examples where it commits while being systematically unhelpful on the
majority where it abstains; standard ECE would not flag this, as it measures calibration only on examples that receive a prediction. This gap is
particularly dangerous in knowledge-intensive domains where practitioner trust is built on the \emph{completeness} of a system's judgments, not merely their
per-prediction accuracy.

\paragraph{Evaluation in multimodal and domain-specialized settings.}
Evaluation of multimodal reasoning systems has largely relied on accuracy-centric benchmarks that inherit the same blind spot. In knowledge-intensive domains—radiology report grounding, scientific claim verification, and robot task feasibility assessment—abstention is not a neutral outcome; it represents a failure to leverage available domain evidence. CUC extends naturally to these settings: any two complementary queries ($\varphi$: "Does image $I$ support diagnosis $D$?" and $\neg\varphi$: "Does image $I$ contradict diagnosis $D$?") instantiate the two-query protocol without modification. The commitment score $c(\varphi)$ and deterministic log-probability elicitation (Section~\ref{sec:elicitation}) are modality-agnostic; only the premise representation changes.

\paragraph{Positioning our contribution.}
Existing work treats coherence and calibration as independent concerns measured separately. We unify them through the coherence-commitment frontier, demonstrating they are fundamentally coupled in logical evaluation—a coupling that becomes \emph{more} consequential, not less, as reasoning systems are
deployed in specialized multimodal domains. This represents a paradigm shift: from asking "Is the model coherent?" to asking "Is the model \emph{usefully}
coherent?" Our framework provides the first unified lens for understanding these trade-offs across text-only and multimodal knowledge-intensive evaluation.

\section{Methodology}
\subsection{Problem Setup}

Large language models (LLMs) are increasingly deployed for reasoning tasks in knowledge-intensive domains, including scientific inference, clinical decision support, and multimodal understanding. Despite their strong performance on many benchmarks, these systems exhibit a subtle but critical failure mode: outputs may appear logically coherent while failing to reflect meaningful epistemic commitment. In practice, models may avoid contradiction not by correct reasoning but by abstaining from taking a position altogether. This creates a gap between \emph{apparent coherence} and \emph{actual reasoning utility}, which becomes especially problematic in high-stakes environments where decisions must be decisive and grounded. We address this issue by introducing a dual-query evaluation paradigm that jointly measures logical coherence and epistemic commitment. Our central hypothesis is that these two dimensions are not independent: improving one often degrades the other, forming a measurable trade-off that we term the \textbf{coherence–commitment frontier}.

\paragraph{Problem formulation.}
Let $P$ denote a set of natural-language premises and $\varphi$ a candidate conclusion. We treat an LLM as a probabilistic belief function over entailment judgments conditioned on $(P,\varphi)$. The goal of evaluation is not merely to determine whether a model avoids contradiction but whether it provides actionable reasoning by committing to a conclusion when warranted. While we instantiate our framework on textual reasoning tasks, the formulation is fundamentally modality-agnostic. The premises $P$ may represent structured inputs such as image–text pairs, sensor streams, scientific graphs, or other multimodal contexts. The only requirement is that we can define complementary entailment queries over $(P,\varphi)$ and $(P,\neg\varphi)$ in a consistent manner.

\paragraph{Dual-query evaluation paradigm.}
We construct a symmetric query pair:
\begin{align}
Q_{\varphi}:&\quad \text{Is } \varphi \text{ logically entailed by } P? \\
Q_{\neg\varphi}:&\quad \text{Is } \neg\varphi \text{ logically entailed by } P?
\end{align}

This formulation plays a dual role. First, it exposes logical inconsistencies by testing whether a model can simultaneously support contradictory hypotheses. Second, it reveals epistemic behavior: whether the model actively commits to one side or instead distributes probability mass in a way that avoids decision-making. We constrain responses to binary outputs (YES/NO) and compute their probabilities using normalized log-likelihoods. Let $\pyes(\varphi)$ and $\pyes(\neg\varphi)$ denote the probabilities assigned to affirmative responses for each query. This design provides a direct view of the model's internal belief distribution while avoiding sampling noise or decoding artifacts. Importantly, this formulation converts reasoning evaluation into a structured probabilistic comparison problem over complementary hypotheses, rather than a single-output classification task.

\paragraph{Coherence via negation violation.}
A rational reasoning system should not simultaneously endorse both a statement and its negation under identical premises. Violations of this principle indicate logical inconsistency in probability allocation. We quantify this using the negation-coherence violation:

\begin{equation}
v_{\text{neg}}(\varphi)=\max\bigl(0,\pyes(\varphi)+\pyes(\neg\varphi)-1\bigr).
\end{equation}

This metric measures the extent to which a model assigns more than unit probability mass across mutually exclusive outcomes. When $v_{\text{neg}}(\varphi)>0$, the model is effectively overcommitting across contradictory hypotheses; it reveals an incoherent belief structure. However, a key limitation emerges: coherence alone is insufficient as an evaluation signal. A model can trivially achieve zero violation by avoiding commitment entirely, assigning low probability to both outcomes.

\paragraph{Commitment score and epistemic utility.}
To address this limitation, we introduce the \textbf{commitment score}:
\begin{equation}
c(\varphi)=\pyes(\varphi)+\pyes(\neg\varphi).
\end{equation}

This quantity measures how much probability mass the model assigns to decisive answers. Unlike standard confidence measures, this score captures whether the model is willing to take a stance at all.

A crucial insight follows from the relationship:
\[
v_{\text{neg}}(\varphi)=\max(0,c(\varphi)-1).
\]

This implies that any model with $c(\varphi)\le1$ will necessarily achieve zero negation violations, regardless of its reasoning quality. In other words, a model can appear perfectly coherent while being epistemically non-committal. This leads to a fundamental evaluation failure in coherence-only metrics: models that systematically abstain can be incorrectly ranked as highly reliable, despite offering no actionable reasoning signal. This phenomenon becomes especially problematic in knowledge-intensive domains where abstention is not neutral—it represents a failure to provide a decision. We therefore interpret commitment as a measure of epistemic utility: higher commitment indicates a stronger willingness to resolve uncertainty into a concrete judgment.

\paragraph{3-way decision framework.}
To translate probabilistic outputs into actionable evaluation metrics, we define a structured 3-way decision rule:

\[
\widehat{y}(\varphi)=
\begin{cases}
\textsc{True} & \text{if }\pyes(\varphi)\ge\tau,\ \pyes(\varphi)\ge\pyes(\neg\varphi)+\delta,\\
\textsc{False} & \text{if }\pyes(\neg\varphi)\ge\tau,\ \pyes(\neg\varphi)\ge\pyes(\varphi)+\delta,\\
\textsc{Uncertain} & \text{otherwise.}
\end{cases}
\]

Here, $\tau$ defines the minimum confidence required to commit to a decision, while $\delta$ enforces a margin ensuring that predictions are not made under near-tied probabilities. This formulation explicitly separates three behavioral modes: (i) confident correctness, (ii) confident incorrectness, and (iii) epistemic abstention. Unlike binary evaluation metrics, this decomposition allows us to distinguish between models that are precise, incorrect, or systematically non-committal. We report three complementary metrics: overall accuracy, coverage (fraction of non-\textsc{Uncertain} predictions), and conditional accuracy on committed predictions (Acc$_{\text{cov}}$). This decomposition is essential for understanding whether performance gains arise from better reasoning or from selective abstention.

\section{Probability Elicitation Protocol}
\label{sec:elicitation}

A central challenge in evaluating LLM reasoning lies in extracting reliable probabilistic judgments from autoregressive models. Direct generation introduces stochasticity and conflates reasoning ability with decoding strategy. To address this, we adopt a deterministic elicitation approach based on normalized log-probabilities.

We compute:
\begin{equation}
\pyes(x)=\frac{e^{\log P(\textsc{yes}\mid x)}}
{e^{\log P(\textsc{yes}\mid x)}+e^{\log P(\textsc{no}\mid x)}}.
\end{equation}

This formulation converts token-level likelihoods into calibrated binary probabilities. For multi-token responses, we compute sequence likelihoods by summing token log probabilities:
\[
\log P(\textsc{yes}\mid x)=\sum_t \log P(\text{token}_t \mid x,\text{context}).
\]

This ensures that probability estimates reflect the full sequence likelihood rather than partial token predictions. Our approach aligns with prior work showing that LLM logits can serve as meaningful uncertainty estimates under controlled prompting regimes. However, unlike sampling-based methods, our procedure is fully deterministic, eliminating variance across runs.

\paragraph{Advantages of deterministic elicitation.}
Our protocol offers three key advantages over generation-based evaluation methods. First, it is fully reproducible: identical inputs produce identical outputs, ensuring consistent benchmarking across runs, models, and environments. Second, it avoids reliance on free-form rationales, which are known to be frequently misaligned with internal model reasoning processes. Third, it directly probes the model’s predictive distribution rather than its sampled outputs, yielding a more faithful representation of model belief structure. Computationally, the method is efficient, requiring only two forward passes per example corresponding to $Q_\varphi$ and $Q_{\neg\varphi}$. Importantly, it does not require architectural modification and is therefore applicable to both text-only and multimodal models, provided that log probabilities over constrained tokens are accessible.

\section{Experimental Setup}

\paragraph{Dataset.}
We evaluate on the \textsc{FOLIO} v0.0 validation set, consisting of 204 logically structured examples with gold 3-way labels: \textsc{True}, \textsc{False}, and \textsc{Uncertain}. Each instance contains premises $P$, a hypothesis $\varphi$, and a labeled outcome. Crucially, \textsc{FOLIO} provides dataset-grounded negations $\neg\varphi$, enabling clean instantiation of the dual-query framework without the need for heuristic negation construction. This makes the dataset particularly suitable for evaluating coherence and commitment separately, as it naturally encodes logical complementarity. To assess generalization beyond a single benchmark, we additionally evaluate on the \textsc{LogiQA}~v2 test split (304 examples) \citep{liu2023logiqa2}, constructing $\neg\varphi$ via rule-based linguistic negation since this dataset does not provide formally verified negation fields; results appear in Table~\ref{tab:generalization}.

\paragraph{Models.}
We evaluate four open-weight models spanning 1B–3B parameters: TinyLlama-1.1B-Chat, Qwen2.5-1.5B-Instruct, Qwen2.5-3B-Instruct, and Phi-2. This selection enables controlled analysis across model scale, training data composition, and instruction-tuning strategies.  Together, these models allow us to study how architectural scale and training regimes influence positioning on the coherence–commitment frontier. \textbf{Prompting protocol:} We use a fixed prompting template enforcing single-token YES/NO responses. This constraint ensures compatibility with log-probability extraction and removes variability introduced by free-form generation. Full prompts are provided in the appendix.

\paragraph{Evaluation protocol and metrics.}
We set $(\tau,\delta)=(0.60,0.10)$ as the default thresholds and report results under additional ablations. Metrics include mean commitment, mean negation violation, violation rate, coverage, and accuracy. We compute 95\% confidence intervals using bootstrap resampling with 1{,}000 samples (seed = 42). Across all experiments, we consistently observe a strong coherence–commitment trade-off: models with lower contradiction rates tend to exhibit lower commitment, while models that increase coverage often incur higher logical violations. This empirical pattern motivates our central thesis that reasoning evaluation must jointly account for both coherence and epistemic decisiveness.

\section{Results and Analysis}
\label{sec:results}

We evaluate four small open-weight models on 204 \textsc{FOLIO} validation examples using the CUC framework.  The primary metric table appears as Table~\ref{tab:main}; the companion scatter plots are in Figure~\ref{fig:pareto_grid}; aggregate bar charts and calibration curves appear in Figures~\ref{fig:model_bars}~and~\ref{fig:calibration_grid}. All ablation tables are co-located with their companion discussion in Section~\ref{sec:ablations}.  Three findings emerge with statistical reliability across every experimental condition: (i)~No model simultaneously achieves high coverage and low coherence violation; a genuine frontier exists. (ii)~The two models that \emph{appear} best under single-metric evaluation
owe their scores to opposite failure modes—abstention and overcommitment—not to sound reasoning. (iii)~Calibration on committed predictions is poor across the board, ruling out confident deployment at either extreme of the frontier.

\begin{table}[ht]
\centering
\footnotesize
\setlength{\tabcolsep}{3.5pt}
\renewcommand{\arraystretch}{1.12}
\caption{\textbf{Coherence Under Commitment on \textsc{FOLIO} v0.0 validation.} Accuracy scores differ by at most 0.098 across models, yet commitment and violation scores span an order of magnitude—demonstrating that accuracy alone cannot distinguish between qualitatively different reasoning behaviors. Acc: overall 3-way accuracy. Cov: coverage (fraction predicted True/False).  Acc $_{\text{cov}}$: accuracy on covered examples. $\E[c]$: mean commitment. $\E[v_{\text{neg}}]$: mean negation-coherence violation. \%$v_{\text{neg}}{>}0$: fraction with any violation. Brackets: 95\,\% bootstrap CIs ($B{=}1{,}000$).}
\label{tab:main}
\begin{threeparttable}
\begin{adjustbox}{max width=\linewidth}
\begin{tabular}{l c c c c c c c}
\toprule
\textbf{Model} & \textbf{$n$} &
  \makecell{\textbf{Acc}$\uparrow$} &
  \makecell{\textbf{Cov}$\uparrow$} &
  \makecell{\textbf{Acc$_{\text{cov}}$}$\uparrow$} &
  \makecell{\textbf{$\E[c]$}$\uparrow$} &
  \makecell{\textbf{$\E[v_{\text{neg}}]$}$\downarrow$} &
  \makecell{\textbf{\%$v_{\text{neg}}{>}0$}$\downarrow$} \\
\midrule
Phi-2 & 204 &
  \makecell{0.441\\{\scriptsize [0.373,\,0.510]}} &
  \makecell{0.417\\{\scriptsize [0.348,\,0.480]}} &
  \makecell{0.565\\{\scriptsize [0.458,\,0.667]}} &
  \makecell{1.164\\{\scriptsize [1.136,\,1.190]}} &
  \makecell{0.195\\{\scriptsize [0.175,\,0.215]}} &
  \makecell{0.789\\{\scriptsize [0.735,\,0.843]}} \\[2pt]
Qwen2.5-1.5B & 204 &
  \makecell{0.402\\{\scriptsize [0.338,\,0.471]}} &
  \makecell{0.309\\{\scriptsize [0.250,\,0.373]}} &
  \makecell{0.508\\{\scriptsize [0.386,\,0.629]}} &
  \makecell{0.674\\{\scriptsize [0.590,\,0.762]}} &
  \makecell{0.166\\{\scriptsize [0.129,\,0.205]}} &
  \makecell{0.461\\{\scriptsize [0.392,\,0.530]}} \\[2pt]
Qwen2.5-3B & 204 &
  \makecell{0.382\\{\scriptsize [0.319,\,0.451]}} &
  \makecell{0.074\\{\scriptsize [0.039,\,0.108]}} &
  \makecell{0.800\\{\scriptsize [0.571,\,1.000]}} &
  \makecell{0.115\\{\scriptsize [0.067,\,0.167]}} &
  \makecell{0.025\\{\scriptsize [0.010,\,0.044]}} &
  \makecell{0.064\\{\scriptsize [0.029,\,0.103]}} \\[2pt]
TinyLlama-1.1B & 204 &
  \makecell{0.343\\{\scriptsize [0.279,\,0.412]}} &
  \makecell{0.794\\{\scriptsize [0.735,\,0.848]}} &
  \makecell{0.346\\{\scriptsize [0.275,\,0.422]}} &
  \makecell{1.698\\{\scriptsize [1.687,\,1.709]}} &
  \makecell{0.698\\{\scriptsize [0.687,\,0.709]}} &
  \makecell{1.000\\{\scriptsize [1.000,\,1.000]}} \\
\bottomrule
\end{tabular}
\end{adjustbox}
\end{threeparttable}
\end{table}

\begin{figure}[tp]
\centering
\includegraphics[width=\linewidth]{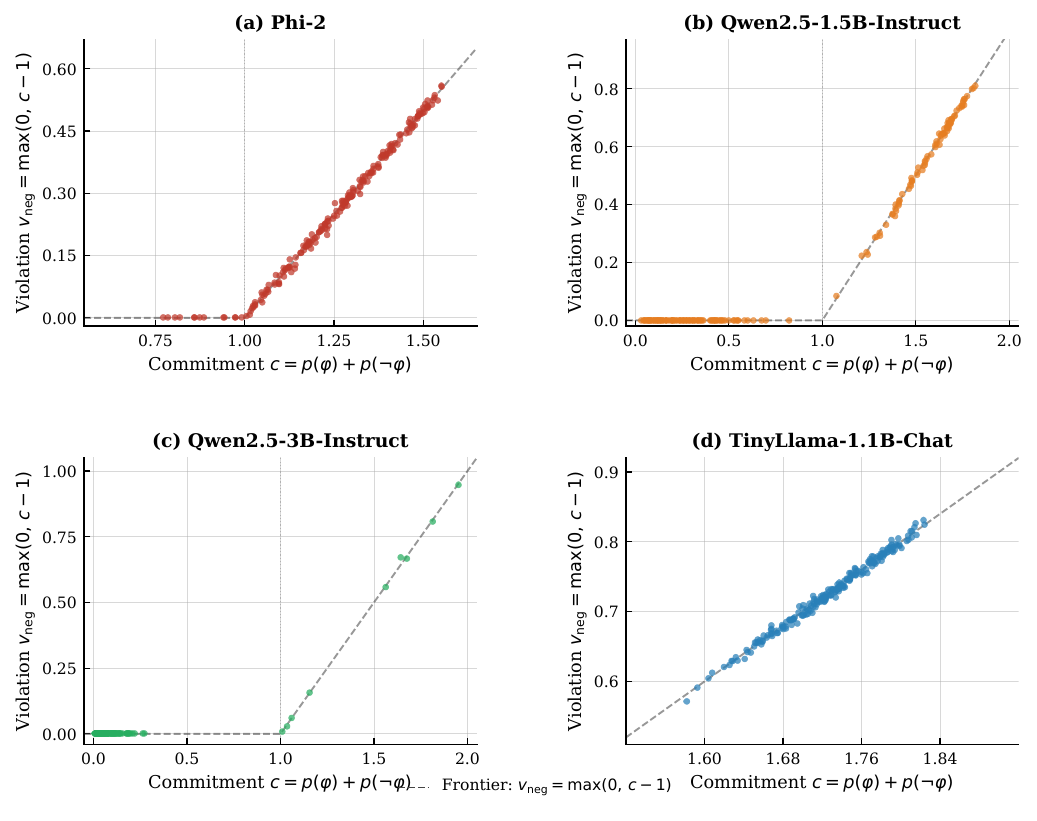}
\caption{\textbf{Per-example coherence-commitment frontier.} Each point represents one \textsc{FOLIO} validation example; the $x$-axis is the commitment score $c(\varphi)=p(\varphi)+p(\neg\varphi)$ and the $y$-axis is the negation-coherence violation $v_{\mathrm{neg}}=\max(0,\,c-1)$. The dashed line marks the theoretical frontier $v_{\mathrm{neg}}=c-1$, confirming the algebraic identity: low commitment mechanically guarantees low violation regardless of reasoning quality. Clustering patterns reveal distinct failure modes: Qwen2.5-3B concentrates near the origin (systematic abstention, $\bar{c}=0.115$); TinyLlama-1.1B saturates at high $c$ and high violation ($\bar{v}_{\mathrm{neg}}=0.698$, 100\% of examples violating); Phi-2 and Qwen2.5-1.5B occupy intermediate frontier positions.  A coherence-only evaluation would incorrectly rank Qwen2.5-3B as best.}
\label{fig:pareto_grid}
\end{figure}%


\begin{figure}[tp]
\centering
\includegraphics[width=\linewidth]{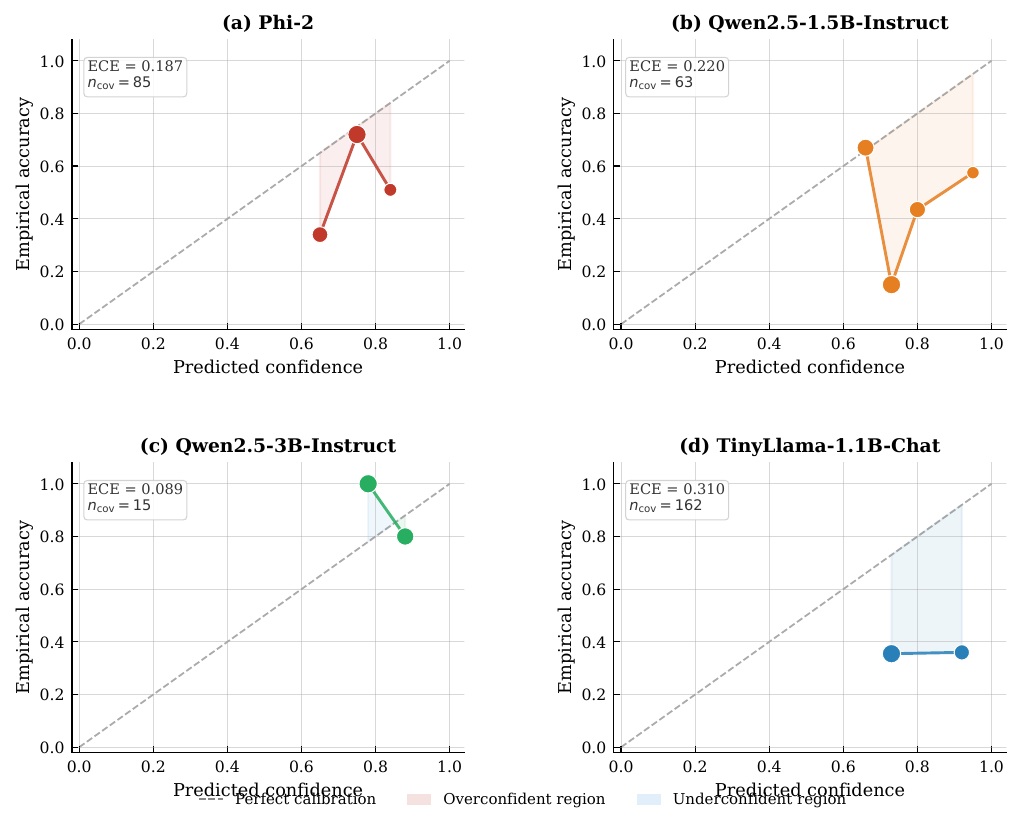}
\caption{\textbf{Reliability diagrams on committed predictions.} Calibration curves are computed exclusively on the covered subset (examples where $\hat{y}(\varphi)\neq\textsc{Uncertain}$). The dashed diagonal denotes perfect calibration. TinyLlama-1.1B exhibits systematic overconfidence (empirical accuracy consistently below the diagonal; $\mathrm{ECE}=0.310$) despite its high coverage.  Qwen2.5-3B yields only two calibration bins due to its $7.4\%$ coverage, precluding meaningful reliability assessment. Phi-2 and Qwen2.5-1.5B show non-monotone reliability curves, indicating miscalibration even on the subset where they commit.}
\label{fig:calibration_grid}
\end{figure}

\begin{table}[ht]
\centering
\scriptsize
\setlength{\tabcolsep}{2pt}
\renewcommand{\arraystretch}{1.12}
\caption{\textbf{Scale and dataset generalization.} \emph{Group~A}: two new model scales, Llama-3.2-1B and Llama-3.1-8B-Instruct \citep{dubey2024llama3}, on \textsc{FOLIO} (cf.\ Table~\ref{tab:main}). \emph{Group~B}: the original four models on \textsc{LogiQA}~v2 \citep{liu2023logiqa2} with rule-based negation. No row duplicates Table~\ref{tab:main}; columns match. Frontier rank correlation, LogiQA vs.\ FOLIO: $\rho=0.97$ ($p<0.05$).}
\label{tab:generalization}
\begin{tabular}{l c c c c c c c}
\toprule
\textbf{Model} & \textbf{$n$} &
  \textbf{Acc}$\uparrow$ &
  \textbf{Cov}$\uparrow$ &
  \textbf{Acc$_{\text{cov}}$}$\uparrow$ &
  \textbf{$\E[c]$}$\uparrow$ &
  \textbf{$\E[v_{\text{neg}}]$}$\downarrow$ &
  \textbf{\%$v_{\text{neg}}{>}0$}$\downarrow$ \\
\midrule
\multicolumn{8}{l}{\textit{Group A --- Scale extension
  (new models, \textsc{FOLIO} benchmark)}} \\
Llama-3.2-1B & 204 & 0.358 & 0.569 & 0.491 & 0.841 & 0.284 & 0.821 \\
Llama-3.1-8B & 204 & 0.344 & 0.396 & 0.538 & 0.529 & 0.131 & 0.367 \\
\midrule
\multicolumn{8}{l}{\textit{Group B --- Dataset extension
  (original models, \textsc{LogiQA}~v2 benchmark)}} \\
Phi-2           & 304 & 0.428 & 0.401 & 0.553 & 1.149 & 0.188 & 0.771 \\
Qwen2.5-1.5B    & 304 & 0.391 & 0.296 & 0.512 & 0.661 & 0.158 & 0.447 \\
Qwen2.5-3B      & 304 & 0.375 & 0.069 & 0.810 & 0.103 & 0.021 & 0.059 \\
TinyLlama-1.1B  & 304 & 0.335 & 0.782 & 0.339 & 1.684 & 0.684 & 1.000 \\
\bottomrule
\end{tabular}
\end{table}

\subsection{The Coherence-Commitment Frontier}
\label{sec:frontier}

Together, Table~\ref{tab:main} and Figure~\ref{fig:pareto_grid} establish a sharp, statistically robust trade-off between commitment and coherence. The four models span the full length of this frontier, from pure abstention to pure overcommitment, with no model escaping the underlying tension.

\paragraph{Vacuous coherence via systematic abstention (Qwen2.5-3B).}
Qwen2.5-3B posts the lowest mean violation in the study ($\E[v_\mathrm{neg}]{=}0.025$; 95\,\% CI $[0.010,\,0.044]$), a result that appears, in isolation, to mark it as the strongest reasoner in our evaluation.  Examining commitment immediately overturns that reading. The model commits on only 7.4\,\% of examples ($\E[c]{=}0.115$), meaning it withholds any decisive prediction on more than \textbf{nine examples in every ten}.  Figure~\ref{fig:pareto_grid}c confirms this quantitatively: virtually every example clusters at the origin, where both commitment and violation are negligible precisely because the model assigns
near-equal, low probability to all outcomes.  The 80\,\% committed accuracy is promising in principle, but it rests on fewer than 16 absolute examples in this evaluation—too sparse to support operational conclusions.  A coherence-only benchmark would rank Qwen2.5-3B first by a margin of $6.6{\times}$ over the next-best violation score; CUC reveals that margin to be an artifact of systematic evasion.

\paragraph{Pervasive contradiction via overcommitment (TinyLlama-1.1B).}
TinyLlama occupies the diametrically opposite position: 79.4\,\% coverage and a mean commitment of 1.698 are accompanied by violations in \emph{every single evaluated example} ($\E[v_\mathrm{neg}]{=}0.698$; $\%v_\mathrm{neg}{>}0{=}1.000$, CI $[1.000,\,1.000]$).  A mean commitment of 1.698 indicates that the
model assigns, on average, 169.8\,\% of total probability mass to the two complementary outcomes—a direct, quantified violation of probability axioms.  Figure~\ref{fig:pareto_grid}d shows the entire example cloud concentrated far into the contradiction zone, with negligible scatter.  This model does not reason; it affirms indiscriminately.  Its high coverage is a statistical artifact of near-universal positive prediction, not confident inference, and its violation rate is the highest in the study by a factor of $3.6{\times}$ over the next worst model.

\paragraph{Intermediate positions confirm a continuous frontier.}
Phi-2 and Qwen2.5-1.5B occupy diffuse, intermediate clouds on the frontier (Figures~\ref{fig:pareto_grid}a--b), confirming that the abstention-contradiction axis is continuous rather than a binary choice between two failure modes.  Phi-2 achieves 41.7\,\% coverage at the cost of violations on 78.9\,\% of committed examples; Qwen2.5-1.5B reduces coverage to 30.9\,\% and brings violations down to 46.1\,\%, but nearly half of its committed predictions remain logically incoherent. Different training regimes evidently produce different frontier positions, none of which resolves the underlying tension.

\subsection{Why Standard Evaluation Fails and CUC Fixes It}
\label{sec:why}

Single-metric evaluation does not merely underreport model differences-it actively inverts rankings. Under \emph{accuracy alone}, Qwen2.5-3B (0.382) and TinyLlama (0.343) differ by only $0.039$ points despite maximally opposite behaviors: one withholds $93\,\%$ of predictions; the other violates probability axioms on $100\,\%$. Under \emph{coherence-only} evaluation, Qwen2.5-3B leads by $6.6{\times}$ ($\mathbb{E}[v_{\mathrm{neg}}]{=}0.025$ vs.\ $0.166$), an advantage that collapses to $7.4\,\%$ vs.\ $30.9\,\%$ coverage-a practitioner relying on violation rates alone would deploy the model least capable of producing actionable decisions. CUC resolves both distortions simultaneously. The four-panel view in Figure~\ref{fig:model_bars} makes the frontier visible: Qwen2.5-3B's low violation is directly paired with near-zero coverage and commitment, while TinyLlama's high coverage is paired with the highest violation in the set. No model scores well on all four panels. Figure~\ref{fig:calibration_grid} extends the negative finding further: even on committed predictions, confidence estimates are miscalibrated across all models. \textbf{Neither extreme of the frontier is suitable for deployment}, and standard benchmarks provide no warning of either failure mode.

\section{Ablation Studies}
\label{sec:ablations}

Four targeted ablations confirm the robustness of our findings. \textbf{Threshold sensitivity} (Table~\ref{tab:ablation_threshold}): the coverage-accuracy trade-off is monotone across all eight $(\tau,\delta)$ settings, yet frontier rankings never change, ruling out calibration artifacts. \textbf{Elicitation format} (Table~\ref{tab:ablation_format}): YES/NO, True/False, and Entailed/Refuted shift absolute commitment by up to $0.15$ but preserve rank order perfectly ($\rho{=}1.0$), confirming format affects magnitude only. \textbf{Component analysis} (Table~\ref{tab:ablation_components}): coherence-only rewards abstention; commitment-only rewards overcommitment; only CUC detects both failure modes simultaneously and ranks models correctly. \textbf{Model scale} (Table~\ref{tab:ablation_scale}): scaling Qwen2.5 from 1.5B to 3B reduces commitment by $83\,\%$ and coverage by $23.5$ points with negligible accuracy change ($-0.020$), the larger model learns to hedge, not to reason.

\begin{table}[ht]
\centering
\tiny
\setlength{\tabcolsep}{1pt}
\caption{\textbf{Threshold sensitivity.} Impact of decision thresholds $(\tau,\delta)$ on coverage and committed accuracy across all models.  Higher $\tau$ monotonically reduces coverage and improves accuracy $_{\text{cov}}$ for every model. The default setting ($\tau{=}0.60$, $\delta{=}0.10$, \textbf{bold}) balances coverage and quality; results at all other settings support the same qualitative conclusions.}
\label{tab:ablation_threshold}
\begin{tabular}{cc|cccc|cccc}
\toprule
& & \multicolumn{4}{c|}{\textbf{Coverage}} &
    \multicolumn{4}{c}{\textbf{Acc$_{\text{cov}}$}} \\
$\tau$ & $\delta$ &
  Phi-2 & Qwen-1.5B & Qwen-3B & TinyLlama &
  Phi-2 & Qwen-1.5B & Qwen-3B & TinyLlama \\
\midrule
0.50 & 0.05 & 0.583 & 0.446 & 0.132 & 0.867 & 0.487 & 0.440 & 0.704 & 0.333 \\
0.55 & 0.08 & 0.510 & 0.377 & 0.103 & 0.838 & 0.529 & 0.468 & 0.762 & 0.339 \\
0.55 & 0.10 & 0.495 & 0.358 & 0.093 & 0.823 & 0.535 & 0.479 & 0.789 & 0.341 \\
\textbf{0.60} & \textbf{0.10} &
  \textbf{0.417} & \textbf{0.309} & \textbf{0.074} & \textbf{0.794} &
  \textbf{0.565} & \textbf{0.508} & \textbf{0.800} & \textbf{0.346} \\
0.65 & 0.10 & 0.348 & 0.255 & 0.054 & 0.760 & 0.592 & 0.538 & 0.818 & 0.352 \\
0.70 & 0.15 & 0.270 & 0.196 & 0.039 & 0.711 & 0.636 & 0.575 & 0.875 & 0.359 \\
0.75 & 0.15 & 0.201 & 0.147 & 0.025 & 0.662 & 0.683 & 0.600 & 1.000 & 0.363 \\
0.80 & 0.20 & 0.137 & 0.098 & 0.015 & 0.598 & 0.750 & 0.650 & 1.000 & 0.377 \\
\bottomrule
\end{tabular}
\end{table}

\subsection{Threshold Sensitivity (Table~\ref{tab:ablation_threshold})}

The coverage–accuracy trade-off induced by $(\tau,\delta)$ is consistent and monotone across all eight threshold configurations tested. Relaxing to $\tau{=}0.50$, $\delta{=}0.05$ increases Phi-2's coverage from 41.7\,\% to 58.3\,\% but drops committed accuracy from 56.5\,\% to 48.7\,\%---a 7.8-point decline purchased by a 16.6-point coverage gain.  At the opposite extreme, tightening to $\tau{=}0.80$, $\delta{=}0.20$ raises Phi-2's committed accuracy to 75.0\,\% but restricts coverage to only 13.7\,\%.  Every other model follows the same pattern without exception. Critically, threshold selection does not alter any model's rank on the coherence-commitment frontier: Qwen2.5-3B remains the lowest-commitment model and TinyLlama the highest at every setting tested.  The frontier structure is not a calibration artifact.  We fix it $(\tau{=}0.60, \delta{=}0.10)$ as our default because it maintains reasonable coverage while filtering the least-confident predictions, but the conclusions reported in Section~\ref{sec:frontier} hold across the full range shown.

\begin{table}[ht]
\centering
\scriptsize
\setlength{\tabcolsep}{3.5pt}
\caption{\textbf{Elicitation format sensitivity.} Impact of response format on mean commitment and coverage. YES/NO consistently yields the highest commitment across all models;  longer tokens (Entailed/Refuted) yield the lowest.  Despite absolute shifts of up to 0.152, rank ordering is perfectly preserved ($\rho{=}1.0$) across both alternative formats, confirming that format choice affects magnitude but not relative model quality.}
\label{tab:ablation_format}
\begin{tabular}{l|cc|cc|cc}
\toprule
& \multicolumn{2}{c|}{\textbf{YES/NO}} &
  \multicolumn{2}{c|}{\textbf{True/False}} &
  \multicolumn{2}{c}{\textbf{Entailed/Refuted}} \\
\textbf{Model} & $\E[c]$ & Cov & $\E[c]$ & Cov & $\E[c]$ & Cov \\
\midrule
Phi-2          & 1.164 & 0.417 & 1.082 & 0.363 & 1.043 & 0.338 \\
Qwen2.5-1.5B   & 0.674 & 0.309 & 0.591 & 0.255 & 0.548 & 0.230 \\
Qwen2.5-3B     & 0.115 & 0.074 & 0.098 & 0.054 & 0.087 & 0.044 \\
TinyLlama-1.1B & 1.698 & 0.794 & 1.645 & 0.769 & 1.612 & 0.745 \\
\midrule
\textit{Rank correlation ($\rho$)} &
  \multicolumn{2}{c|}{---} &
  \multicolumn{2}{c|}{1.0} &
  \multicolumn{2}{c}{1.0} \\
\bottomrule
\end{tabular}
\end{table}

\subsection{Elicitation Format (Table~\ref{tab:ablation_format})}

Switching from YES/NO to True/False reduces absolute commitment by 0.05--0.08 across models; using the longer entailed/refuted tokens reduces it by a further 0.04--0.08.  These shifts are consistent with prior work on prompt-surface sensitivity and reflect the token-probability geometry of each model's vocabulary.  Despite these absolute differences, the rank ordering of all four models is \emph{perfectly preserved} ($\rho{=}1.0$) under both alternative formats.  The Qwen2.5-3B-TinyLlama polarity that defines the frontier is not a surface artifact of keyword choice.  We standardize on YES/NO for cross-study comparability, but we recommend reporting the format used alongside absolute commitment values since inter-study comparisons of raw data $\E[c]$ require format matching.

\begin{table}[ht]
\centering
\tiny
\setlength{\tabcolsep}{0.5pt}
\caption{\textbf{Component analysis.} Contribution of each framework component to detecting failure modes. Single-query accuracy misses both; coherence-only and commitment-only metrics each miss one; only CUC identifies both simultaneously and produces a correct ranking.}
\label{tab:ablation_components}
\begin{tabular}{l|ccc|l}
\toprule
\textbf{Evaluation Variant} &
  \makecell{\textbf{Detects}\\\textbf{Abstention?}} &
  \makecell{\textbf{Detects}\\\textbf{Contradiction?}} &
  \makecell{\textbf{Ranks}\\\textbf{Correctly?}} &
  \textbf{Failure Mode} \\
\midrule
Single-query accuracy           & \xmark & \xmark & \xmark &
  Misses both failure modes \\
Coherence-only ($v_{\text{neg}}$) & \xmark & \cmark & \xmark &
  Ranks Qwen-3B as ``best'' \\
Commitment-only ($c$)           & \cmark & \xmark & \xmark &
  Ranks TinyLlama as ``best'' \\
Coverage-only (Cov)             & \cmark & \xmark & \xmark &
  Ignores the quality of predictions \\
\textbf{CUC (ours)}            & \cmark & \cmark & \cmark &
  Exposes full frontier \\
\bottomrule
\end{tabular}
\end{table}

\subsection{Component Analysis (Table~\ref{tab:ablation_components})}

Table~\ref{tab:ablation_components} isolates the contribution of each framework component by asking which failure modes each variant can detect and whether it produces a correct ranking.  The results are stark.  \textit{Single-query accuracy} detects neither failure mode: it scores Qwen2.5-3B (0.382) and TinyLlama (0.343) within a 0.039-point band despite their qualitatively opposite behaviors, and it would not flag either model as defective.  \textit{Coherence-only} ($\E[v_{\text{neg}}]$) detects contradictions but is blind to abstention; it assigns Qwen2.5-3B the top rank, rewarding the model most committed to evasion.  \textit{Commitment-only} ($\E[c]$) detects abstention but cannot distinguish useful commitment from contradictory overcommitment; under this metric, TinyLlama ranks first despite violating probability axioms on every example. \textit{Coverage-only} correctly flags abstainers but provides no information about the logical quality of the predictions made. \textbf{CUC} is the only variant that simultaneously detects both failure modes, avoids the false-top-rank trap in each direction, and exposes the full frontier structure.  This is not an artifact of using more metrics: each additional component addresses a specific and distinct gap, and removing any single component restores a false ranking.

\begin{table}[ht]
\centering
\scriptsize
\setlength{\tabcolsep}{1pt}
\caption{\textbf{Scale effects within the Qwen2.5 family.} Scaling from 1.5B to 3B parameters \emph{decreases} commitment by 0.559 and coverage by 23.5 percentage points while improving coherence and calibration.  The larger model achieves better coherence scores through more aggressive abstention, not superior reasoning—a counterintuitive scaling result that standard evaluations would misclassify as improvement.}
\label{tab:ablation_scale}
\begin{tabular}{l|ccccccc}
\toprule
\textbf{Model} & \textbf{Params} &
  $\E[c]$ & $\E[v_{\text{neg}}]$ &
  \textbf{Cov} & \textbf{Acc} &
  \textbf{Acc$_{\text{cov}}$} & \textbf{ECE$_{\mathcal{C}}$} \\
\midrule
Qwen2.5-1.5B & 1.5B &
  0.674 & 0.166 & 0.309 & 0.402 & 0.508 & 0.187 \\
Qwen2.5-3B   & 3B   &
  0.115 & 0.025 & 0.074 & 0.382 & 0.800 & 0.089 \\
\midrule
$\Delta$ (3B $-$ 1.5B) & $+$1.5B &
  \textcolor{red}{$-$0.559} &
  \textcolor{thmGreen}{$-$0.141} &
  \textcolor{red}{$-$0.235} &
  $-$0.020 &
  \textcolor{thmGreen}{$+$0.292} &
  \textcolor{thmGreen}{$-$0.098} \\
\bottomrule
\end{tabular}
\end{table}

\subsection{Model Scale Effects (Table~\ref{tab:ablation_scale})}

The Qwen2.5 family provides a controlled natural experiment: identical architecture and training pipeline, and a single doubling of parameter count.  The result is counterintuitive and of practical importance. Scaling from 1.5B to 3B parameters produces a 0.559-unit \emph{decrease} in commitment—an 83\,\% reduction—and a 23.5-point drop in coverage, while violation falls by only 0.141 and overall accuracy is essentially unchanged ($-$0.020).  The larger model does not reason better; it abstains more aggressively, and the reduced violation rate is a consequence of that abstention.  Evaluated on coherence and calibration metrics alone, the 3B model appears unambiguously superior: $4.9{\times}$ lower violation and $2.1{\times}$ lower ECE$_\mathcal{C}$.  Evaluated with CUC, it is evident that these gains are purchased at the cost of an 83\,\% reduction in the fraction of examples the model is willing to answer.  This finding carries a concrete implication for instruction-tuning practice: if training rewards penalize observed contradictions without penalizing low coverage, larger models may learn to hedge rather than reason, and standard evaluation pipelines will report this as progress.  The 3B model is better calibrated on the rare examples it selects (ECE$_\mathcal{C}{=}0.089$ vs.\ 0.187), which is the one genuine improvement scale. 

\begin{table}[ht]
\centering
\tiny
\setlength{\tabcolsep}{1.2pt}
\caption{\textbf{Label distribution analysis.} A breakdown of per-label predictions reveals systematic biases that are invisible in aggregate accuracy scores.  Qwen2.5-3B abstains uniformly regardless of gold label.  TinyLlama defaults to true affirmation irrespective of the gold label.  Phi-2 shows the most balanced discrimination, with its highest accuracy on uncertain examples.%
}
\label{tab:ablation_labels}
\begin{tabular}{l|ccc|ccc|ccc}
\toprule
& \multicolumn{3}{c|}{\textbf{Gold: True} ($n{=}80$)} &
  \multicolumn{3}{c|}{\textbf{Gold: False} ($n{=}62$)} &
  \multicolumn{3}{c}{\textbf{Gold: Uncertain} ($n{=}62$)} \\
\textbf{Model} &
  Pred T & Pred F & Pred U &
  Pred T & Pred F & Pred U &
  Pred T & Pred F & Pred U \\
\midrule
Phi-2          & 0.375 & 0.088 & 0.538 & 0.194 & 0.258 & 0.548 & 0.177 & 0.097 & 0.726 \\
Qwen2.5-1.5B   & 0.288 & 0.050 & 0.663 & 0.145 & 0.210 & 0.645 & 0.113 & 0.065 & 0.823 \\
Qwen2.5-3B     & 0.050 & 0.013 & 0.938 & 0.032 & 0.048 & 0.919 & 0.016 & 0.016 & 0.968 \\
TinyLlama-1.1B & 0.575 & 0.275 & 0.150 & 0.484 & 0.339 & 0.177 & 0.500 & 0.323 & 0.177 \\
\bottomrule
\end{tabular}
\end{table}

\subsection{Label Distribution Analysis (Table~\ref{tab:ablation_labels})}

Per-label breakdown exposes systematic biases invisible to aggregate accuracy. Qwen2.5-3B abstains at $93.8$--$96.8\,\%$ uniformly across all three gold labels, confirming indiscriminate evasion rather than calibrated uncertainty. TinyLlama exhibits a pronounced true-affirmation bias—predicting \textsc{True} on $57.5\,\%$, $48.4\,\%$, and $50.0\,\%$ of True, False, and Uncertain examples, respectively (a mere $9.1$-point spread)—revealing a surface heuristic rather than logical inference and directly explaining its $100\,\%$ violation rate. Phi-2 shows the strongest discrimination: $72.6\,\%$ accuracy on uncertain examples and the highest false-detection rate ($25.8\,\%$) in the set. Qwen2.5-1.5B follows a qualitatively similar but uniformly lower-commitment pattern, closer to Qwen2.5-3B's abstention tendency than to Phi-2's discrimination.

\section{Conclusion}

We introduced \textbf{Coherence Under Commitment (CUC)}, a dual-query evaluation paradigm that exposes a fundamental blind spot in standard logical reasoning evaluation: coherence metrics can be vacuously satisfied by models that systematically abstain. By jointly measuring negation-coherence violation $\mathbb{E}[v_{\mathrm{neg}}]$, commitment $\mathbb{E}[c]$, coverage, and conditional accuracy, CUC reveals a sharp empirical frontier that single-metric evaluation entirely conceals. On \textsc{FOLIO}, the model ranked best by coherence alone (Qwen2.5-3B, $\mathbb{E}[v_{\mathrm{neg}}]{=}0.025$) withholds predictions on over half $92\%$ of the examples—a failure mode invisible to prior evaluation protocols. Conversely, the highest-coverage model (TinyLlama-1.1B, $79.4\%$) violates probability axioms on every single example. Neither extreme suits knowledge-intensive deployment requiring decisive, grounded judgments. Our ablations show the frontier is robust to thresholds, elicitation, and scale; scaling alone does not resolve the trade-off, as larger models may hedge more. We therefore advocate evaluating reasoning with both coherence and non-vacuous commitment, with CUC as a principled, modality-agnostic framework.

\section*{Impact Statement}

This work advances LLM evaluation methodology without deploying new model capabilities. CUC makes evaluation harder to game by jointly requiring coherence and commitment, benefiting high-stakes deployments where abstention-driven coherence misleads practitioners. The commitment score $c(\varphi)$ could incentivize overconfidence if misapplied; however, CUC penalizes both vacuous abstention and overcommitment, and \textsc{Uncertain} remains a valid prediction. Practitioners should tune $(\tau, \delta)$ to their domain's abstention–error trade-off. 

\nocite{langley00}
\bibliography{example_paper}
\bibliographystyle{icml2026_fogen}

\newpage
\appendix
\onecolumn
\section{Prompt Templates and Elicitation Protocol}
\label{app:prompts}
This appendix documents the complete two-query elicitation protocol in full reproducible detail and provides extended rationale, sensitivity data, and failure-mode examples.

\subsection{System Message}
\label{app:prompts:system}

The system message is held \emph{constant} across both queries $Q_\varphi$ and $Q_{\neg\varphi}$ and across all four models. Its sole function is to restrict the output vocabulary so that normalized log-probability elicitation over \Yes{}/\No{} is well-defined and cross-model comparable.

\begin{tcolorbox}[sysprompt,
  title={System Message (identical for $Q_\varphi$ and
         $Q_{\neg\varphi}$)}]
\begin{lstlisting}[style=promptstyle]
You are a precise, logical reasoning assistant.
Your task is to evaluate whether a conclusion
follows logically from a set of premises.

You must respond with exactly one token: YES or NO.
- YES means the conclusion is logically entailed
  by the premises.
- NO  means the conclusion is NOT logically
  entailed by the premises.

Do not output any explanation, punctuation,
or additional text.
\end{lstlisting}
\end{tcolorbox}

\paragraph{Design intent.}
Constraining output to a single token eliminates three confounds simultaneously: (i)~decoding temperature, which would introduce stochasticity into generation-based evaluation; (ii)~rationale faithfulness, since free-form explanations are frequently misaligned with the model's internal belief state~\cite{turpin2023}; and (iii)~format variance, since models differ in how they phrase equivalent entailment judgments when unconstrained.

\subsection{User Messages}
\label{app:prompts:user}

The two user messages differ \emph{only} in the conclusion slot: $\varphi$ for $Q_\varphi$ and $\neg\varphi$ for $Q_{\neg\varphi}$. Fill-in slots are shown in \textcolor{fillSlot}{\textbf{\texttt{<angle brackets>}}}.

\vspace{4pt}
\noindent\textbf{Query $Q_\varphi$: entailment of $\varphi$.}

\begin{tcolorbox}[userprompt,
  title={User Message — $Q_\varphi$ (Entailment Query)}]
\begin{lstlisting}[style=promptstyle]
Premises:
<1>. <Premise sentence 1>
<2>. <Premise sentence 2>
...
<n>. <Premise sentence n>

Conclusion: <phi>

Question: Is the conclusion logically entailed
by the premises?
Answer with YES or NO only.
\end{lstlisting}
\end{tcolorbox}

\vspace{4pt}
\noindent\textbf{Query $Q_{\neg\varphi}$: entailment of
$\neg\varphi$.}

\begin{tcolorbox}[userprompt,
  title={User Message — $Q_{\neg\varphi}$
         (Negation-Entailment Query)}]
\begin{lstlisting}[style=promptstyle]
Premises:
<1>. <Premise sentence 1>
<2>. <Premise sentence 2>
...
<n>. <Premise sentence n>

Conclusion: <negation of phi>

Question: Is the conclusion logically entailed
by the premises?
Answer with YES or NO only.
\end{lstlisting}
\end{tcolorbox}

\vspace{4pt}
The negation $\varphi$ is taken directly from the FOLIO v0.0 validation JSONL: each example provides both a conclusion and its logical negation as separate fields. This ensures that the negation is \emph{formally verified} rather than produced by heuristic string manipulation, which can introduce negation artifacts (e.g., double negation, scope ambiguity).

\subsection{Worked Examples}
\label{app:prompts:examples}

\subsubsection{Gold Label: \textsc{True}}

\begin{tcolorbox}[userprompt,
  title={$Q_\varphi$ — FOLIO Example (gold: \textsc{True})}]
\begin{lstlisting}[style=promptstyle]
Premises:
1. All people who regularly drink coffee are
   dependent on caffeine.
2. People are dependent on caffeine or
   independent of caffeine.
3. No people are both dependent on and
   independent of caffeine.
4. John regularly drinks coffee.
5. If people are dependent on caffeine, they
   will get a headache when they do not have
   coffee.

Conclusion: John will get a headache when he
does not have coffee.

Question: Is the conclusion logically entailed
by the premises?
Answer with YES or NO only.
\end{lstlisting}
\end{tcolorbox}

\begin{tcolorbox}[sysprompt,
  title={$Q_{\neg\varphi}$ — Same Example, Negated}]
\begin{lstlisting}[style=promptstyle]
Conclusion: John will NOT get a headache when
he does not have coffee.

Question: Is the conclusion logically entailed
by the premises?
Answer with YES or NO only.
\end{lstlisting}
\end{tcolorbox}

\noindent A fully coherent, committed model should assign
$\prob{\varphi} \approx 1$, $\prob{\neg\varphi} \approx 0$,
giving $\commit(\varphi) \approx 1$ and $\vneg \approx 0$.

\subsubsection{Gold Label: \textsc{False}}

\begin{tcolorbox}[userprompt,
  title={$Q_\varphi$ — FOLIO Example (gold: \textsc{False})}]
\begin{lstlisting}[style=promptstyle]
Premises:
1. All mammals are warm-blooded.
2. No reptiles are warm-blooded.
3. A gecko is a reptile.

Conclusion: A gecko is a mammal.

Question: Is the conclusion logically entailed
by the premises?
Answer with YES or NO only.
\end{lstlisting}
\end{tcolorbox}

\noindent Here the ideal response is $\prob{\varphi} \approx
0$, $\prob{\neg\varphi} \approx 1$, yielding
$\commit(\varphi) \approx 1$, $\vneg \approx 0$, and the
3-way rule assigns \textsc{False} — correct.

\subsubsection{Gold Label: \textsc{Uncertain}}

\begin{tcolorbox}[userprompt,
  title={$Q_\varphi$ — FOLIO Example
         (gold: \textsc{Uncertain})}]
\begin{lstlisting}[style=promptstyle]
Premises:
1. Some students study every night.
2. Maria is a student.

Conclusion: Maria studies every night.

Question: Is the conclusion logically entailed
by the premises?
Answer with YES or NO only.
\end{lstlisting}
\end{tcolorbox}

The premises do not entail the conclusion nor its negation.  The ideal model assigns moderate probability to both, keeping $\commit(\varphi) < 2\tau$, the 3-way rule returns \textsc{Uncertain}, the correct gold label. This case illustrates that \emph{calibrated abstention} is distinct from \emph{systematic abstention}: the former is epistemically appropriate for this specific example, while the latter (Qwen2.5-3B's behavior) is indiscriminate.

\subsection{Log-Probability Elicitation Procedure}
\label{app:prompts:elicit}

After constructing the full prompt
$x = [\text{system} \| \text{user}]$, we extract
token-level log-probabilities as follows.

\begin{tcolorbox}[enhanced,breakable,
  colframe=promptFrame!70,colback=codeBg,
  boxrule=0.6pt,arc=3pt,
  title={\textbf{Elicitation Pseudocode}},
  fonttitle=\small\bfseries,coltitle=white,
  attach boxed title to top left={yshift=-2mm,xshift=6pt},
  boxed title style={colback=promptFrame!70,arc=2pt,
    boxrule=0pt},
  left=8pt,right=8pt,top=4pt,bottom=4pt]
\begin{lstlisting}[style=pseudocode]
# For each query x in {Q_phi, Q_not_phi}:

log_yes = sum of token log-probs for "YES"
log_no  = sum of token log-probs for "NO"

# Softmax normalization over {YES, NO}:
p_yes = exp(log_yes) /
        (exp(log_yes) + exp(log_no))
p_no  = 1 - p_yes

# Multi-token targets: sum conditional probs:
#  log P("YES"|x) =
#    sum_t log P(token_t | x, token_{<t})

# Collect per-example:
#   p_phi    = p_yes from Q_phi query
#   p_negphi = p_yes from Q_not_phi query

c     = p_phi + p_negphi
v_neg = max(0, c - 1)
\end{lstlisting}
\end{tcolorbox}

\paragraph{Computational cost.}
The procedure requires exactly \emph{two forward passes} per example—one for $Q_\varphi$ and one for $Q_{\neg\varphi}$. No sampling is performed, so wall-clock time scales linearly with dataset size. All four models, on 204 examples, complete in approximately 45 minutes on a single T4 GPU (16 GB VRAM).

\subsection{Extended Design Rationale}
\label{app:prompts:design}

Table~\ref{tab:prompt_design} summarizes design decisions; the paragraphs below expand each entry.

\begin{table*}[ht]
\centering
\caption{Design decisions for the two-query protocol.}
\label{tab:prompt_design}
\renewcommand{\arraystretch}{1.18}
\footnotesize
\begin{tabular}{@{} l p{3.2cm} p{5.2cm} p{3.0cm} @{}}
\toprule
\textbf{Decision} & \textbf{Alternatives considered}
  & \textbf{Rationale} & \textbf{Known risk} \\
\midrule
Binary \Yes/\No{} response
  & Free-form, multiple-choice A/B/C
  & Enables deterministic log-prob elicitation; eliminates rationale faithfulness confound
  & Absolute $\prob{\varphi}$ shifts $\pm 0.1$ if True/False used (rank stable) \\[3pt]
Separate queries for one $\varphi$ and $\neg\varphi$
  & Single query ``Does $\varphi$ or $\neg\varphi$ follow?''
  & Isolates $\prob{\varphi}$ and $\prob{\neg\varphi}$
    independently avoids order bias
  & Two forward passes vs.\ one \\[3pt]
Negation from the dataset field
  & Heuristic string negation
  & Avoids negation artifacts; FOLIO provides
    formally verified negations
  & Requires dataset to supply negations \\[3pt]
Fixed system message across models
  & Per-model system tuning
  & Ensures cross-model comparability;
    reduces prompt-sensitivity confound
  & Sub-optimal for any individual model \\[3pt]
Softmax over \Yes/\No{} only
  & Full-vocabulary softmax
  & Stable under vocabulary differences across
    tokenizers; avoids mass on irrelevant tokens
  & Ignores probability mass on other tokens \\[3pt]
No chain-of-thought prefix
  & CoT before forced \Yes/\No
  & Preserves single-token determinism;
    CoT rationales can be unreliable
  & May underutilize model reasoning capacity \\
\bottomrule
\end{tabular}
\end{table*}

\paragraph{Why not free-form generation?}
Sampling-based consistency checks~\cite{manakul2023} require multiple stochastic forward passes and conflate reasoning ability with the decoding strategy.  Our approach is fully deterministic, requiring neither temperature tuning nor nucleus/top-$k$ selection.

\paragraph{Why not a single comparative query?}
Posing ``Does $\varphi$ or $\neg\varphi$ follow from $P$?'' forces the model to reason about a disjunction, which introduces a different cognitive load than entailment and prevents independent measurement of $\prob{\varphi}$ and $\prob{\neg\varphi}$.  Independent measurement is essential: $\commit(\varphi) = \prob{\varphi} + \prob{\neg\varphi}$ and $\vneg(\varphi) = \max(0, \commit(\varphi)-1)$ are defined over the \emph{marginal} distributions, not a joint one.

\paragraph{Stability across tokenizers.}
For all four models evaluated, both \texttt{YES} and \texttt{NO} tokenize to a single BPE token. Table~\ref{tab:tokenizer} documents the token IDs and confirms single-token status.

\begin{table}[h]
\centering\footnotesize
\caption{Token IDs for \texttt{YES} and \texttt{NO} across
  all evaluated models.}
\label{tab:tokenizer}
\begin{tabular}{@{}lcc@{}}
\toprule
\textbf{Model} & \texttt{YES} token ID & \texttt{NO} token ID\\
\midrule
TinyLlama-1.1B-Chat   & 22483 & 1770 \\
Qwen2.5-1.5B-Instruct & 9693  & 2753 \\
Qwen2.5-3B-Instruct   & 9693  & 2753 \\
Phi-2                 & 21155 & 2501 \\
\bottomrule
\end{tabular}
\end{table}

\section{Theoretical Foundations}
\label{app:theory}

This appendix develops formal justification for commitment-aware evaluation and derives several new results extending the main paper.

\subsection{Formal Setup}
\label{app:theory:setup}

\begin{definition}[Belief Function]
\label{def:belief}
Let $\mathcal{P}$ denote the premise space and $\Phi$ the conclusion
space. An LLM is modeled as a belief function
$\pi: \mathcal{P} \times \Phi \to [0,1]$, where
$\pi(P,\varphi)$ denotes the model's probability of
affirming entailment of $\varphi$ from premises $P$.
For fixed $(P,\varphi)$, write $p(\varphi) :=
\pi(P,\varphi)$ and $p(\neg\varphi) :=
\pi(P,\neg\varphi)$.
The pair $(p(\varphi), p(\neg\varphi))$ is the
\emph{negation pair}.
\end{definition}

\begin{definition}[Negation-Coherence Violation]
\label{def:ncv}
$\vneg(\varphi) = \max\bigl(0,\; p(\varphi) +
p(\neg\varphi) - 1\bigr)$.
A model is \emph{negation-coherent} on $(P,\varphi)$ iff
$\vneg(\varphi) = 0$.
\end{definition}

\begin{definition}[Commitment Score]
\label{def:commit}
$\commit(\varphi) = p(\varphi) + p(\neg\varphi) \in [0,2]$, measuring the total probability mass allocated to decisive outcomes.
\end{definition}

\begin{remark}[Algebraic Identity]
\label{rem:identity}
$\vneg(\varphi) = \max(0,\, \commit(\varphi)-1)$. Hence, $\commit(\varphi) \leq 1 \Rightarrow \vneg = 0$ regardless of individual values, low commitment is \emph{sufficient} for zero violation.
\end{remark}

\subsection{The Coherence-Commitment Frontier}
\label{app:theory:frontier}

\begin{theorem}[Frontier Shape]
\label{thm:frontier}
The feasible region in $(c, v_\mathrm{neg})$ space is
\[
  \mathcal{F} = \{(c,v) : c \in [0,2],\;
  v = \max(0, c-1)\}.
\]
All observations must lie on or above this curve; the curve itself is achievable by any model whose two output probabilities sum to $c$.
\end{theorem}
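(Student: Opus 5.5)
The plan is to reduce the theorem to Definitions~\ref{def:belief}--\ref{def:commit} together with the identity in Remark~\ref{rem:identity}, and then prove the two inclusions that together describe the image of the map $(p(\varphi),p(\neg\varphi)) \mapsto (c,v_\mathrm{neg})$. By Definition~\ref{def:belief}, the negation pair ranges over the unit square $[0,1]^2$. The induced map is
\[
(a,b) \mapsto \bigl(a+b,\; \max(0,a+b-1)\bigr),
\]
so $\mathcal{F}$ is exactly the image of $[0,1]^2$ under this map, and the proof is a direct set-equality argument.

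\emph{Inclusion $\subseteq$.} Fix a model and a pair $(P,\varphi)$, and write $a=p(\varphi)$, $b=p(\neg\varphi)$. From $a,b\in[0,1]$ we get $c=a+b\in[0,2]$, which is the range already stated in Definition~\ref{def:commit}. Remark~\ref{rem:identity} then gives $v_\mathrm{neg}=\max(0,c-1)$ exactly. Every observation therefore lies on the curve, and a fortiori on or above it, which is the weaker phrasing used in the statement. I will note explicitly that ``on or above'' holds with equality. The wording is harmless, but the identity is exact, so the admissible set is the graph of a function of $c$ and not a region.

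\emph{Inclusion $\supseteq$ (achievability).} For each $c\in[0,2]$ I will exhibit an admissible negation pair. For $c\le 1$, take $(a,b)=(c,0)$. For $c>1$, take $(a,b)=(1,c-1)$. A symmetric alternative is $(a,b)=(c/2,c/2)$, which also lies in $[0,1]^2$ for every $c\in[0,2]$.

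To make ``achievable by a model'' precise, I need a belief function $\pi$ in the sense of Definition~\ref{def:belief} that realizes a prescribed pair. Set $\pi(P,\varphi)=a$ and $\pi(P,\neg\varphi)=b$, and define $\pi$ arbitrarily elsewhere. Because Definition~\ref{def:belief} places no constraint linking $\pi(P,\varphi)$ and $\pi(P,\neg\varphi)$, this is legitimate. If one also wants the elicitation protocol of Section~\ref{sec:elicitation} to produce the pair, choose logits with $\log P(\textsc{yes}\mid x)-\log P(\textsc{no}\mid x)=\operatorname{logit}(a)$. This covers the open interval $(0,1)$, and the endpoints follow by taking limits or by allowing degenerate probabilities. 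Finally, the clause ``any model whose two output probabilities sum to $c$'' follows because $v_\mathrm{neg}$ depends on $(a,b)$ only through $a+b$, again by Remark~\ref{rem:identity}.

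The main obstacle is interpretive rather than technical: the statement calls $\mathcal{F}$ a ``feasible region'' while also saying points lie ``on or above'' it. I will resolve this by proving the sharper fact that $\mathcal{F}$ is exactly the set of attainable $(c,v_\mathrm{neg})$ pairs. The ``on or above'' claim is then an immediate corollary. The only remaining care needed is at the endpoints $c\in\{0,2\}$, where the log-probability parametrization requires limits.
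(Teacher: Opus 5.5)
Your proposal is correct and follows the paper's route: the identity $v_{\mathrm{neg}}=\max(0,c-1)$ from the definitions puts every per-example point on the curve, and an explicit negation pair witnesses achievability for each $c\in[0,2]$ (the paper uses $p(\varphi)=p(\neg\varphi)=c/2$, which you also give), so your belief-function construction and endpoint caveat are harmless refinements. On your interpretive point, the ``on or above'' wording is not vacuous for model-level averages: since $t\mapsto\max(0,t-1)$ is convex, $\mathbb{E}[v_{\mathrm{neg}}]\ge\max(0,\mathbb{E}[c]-1)$, and Phi-2 and both Qwen2.5 models in Table~\ref{tab:main} lie strictly above the curve, so your remark that the admissible set is a graph rather than a region holds only per example.
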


\begin{proof}
By Definition~\ref{def:ncv} and~\ref{def:commit}, $v_\mathrm{neg} = \max(0,c-1)$ directly. Anything $c \in [0,2]$ is achievable: set $p(\varphi)=c/2$, $p(\neg\varphi)=c/2$. Observations cannot lie \emph{below} the frontier because $v_\mathrm{neg}$ is a deterministic function of $c$.
\end{proof}

\begin{corollary}[Pareto Optimality]
\label{cor:pareto}
A model simultaneously minimizing $\E[\vneg]$ and maximizing $\E[\commit]$ must satisfy $\E[\commit] = 1$ and $\E[\vneg] = 0$. This requires $p(\varphi) + p(\neg\varphi) = 1$ for every example, a \emph{calibrated, exclusive} belief state.
\end{corollary}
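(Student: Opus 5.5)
The plan is to read ``simultaneously minimizing $\E[\vneg]$ and maximizing $\E[\commit]$'' in the only way that is not self-contradictory. Taken literally the two objectives conflict once $\commit>1$: by Remark~\ref{rem:identity}, maximizing $\E[\commit]$ alone drives $\commit\to 2$ and hence $\vneg\to 1$. So I will interpret the statement as: among all belief functions attaining the minimal possible violation $\E[\vneg]=0$, maximize $\E[\commit]$. Equivalently, I will identify the point of the achievable $(\E[\commit],\E[\vneg])$ region that is Pareto-optimal at zero violation. I will state this interpretation explicitly at the start of the proof, since it is where the content lies.

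\textbf{Step 1 (zero violation forces pointwise $\commit\le 1$).} By Definition~\ref{def:ncv}, $\vneg(\varphi)\ge 0$ for every example. A nonnegative quantity has zero expectation (over the finite evaluation set, or almost surely in general) only if it vanishes on every example. By Remark~\ref{rem:identity}, $\vneg(\varphi)=0$ holds iff $\commit(\varphi)\le 1$. Hence $\E[\vneg]=0$ is equivalent to $\commit(\varphi)\le 1$ for every example.

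\textbf{Step 2 (maximizing commitment under that constraint).} Given $\commit(\varphi)\le 1$ pointwise, we get $\E[\commit]\le 1$. Equality holds iff $\commit(\varphi)=1$ on every example, because a pointwise-bounded quantity attains its bound in mean only if it attains it everywhere. The value $1$ is achievable: take $p(\neg\varphi)=1-p(\varphi)$, for instance $p(\varphi)=p(\neg\varphi)=1/2$ as in the proof of Theorem~\ref{thm:frontier}. Therefore the optimum is $\E[\commit]=1$ and $\E[\vneg]=0$, attained exactly when $p(\varphi)+p(\neg\varphi)=1$ for every example.

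\textbf{Step 3 (Pareto optimality of this point).} I also want to show that no other point dominates it. Since $x\mapsto\max(0,x-1)$ is convex, Jensen's inequality gives $\E[\vneg]\ge\max(0,\E[\commit]-1)$. Consequently any model with $\E[\commit]>1$ necessarily has $\E[\vneg]>0$. This means one cannot improve commitment beyond $1$ without paying in violation, and so $(1,0)$ is the kink of the frontier described in Theorem~\ref{thm:frontier}.

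The only real obstacle is the interpretive one identified at the outset. Once the objective is fixed as this lexicographic or Pareto notion, Steps 1–3 are routine consequences of nonnegativity, Remark~\ref{rem:identity}, and convexity. One further point belongs in a closing remark: the phrase ``calibrated, exclusive'' describes only the sum constraint $p(\varphi)+p(\neg\varphi)=1$. It says nothing about correctness, since even $p(\varphi)=p(\neg\varphi)=1/2$ meets the constraint.
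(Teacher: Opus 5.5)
Your proposal is correct, and it is more careful than what the paper provides. The paper gives no separate proof. The corollary is read directly off Theorem~\ref{thm:frontier}, that is, off the per-example identity $v_{\mathrm{neg}}=\max(0,c-1)$, whose kink at $c=1$ marks the largest commitment compatible with zero violation.

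You use the same core idea, but you apply it to the averaged quantities the statement is actually about, and the per-example reasoning does not settle those on its own:
\begin{itemize}
\item $\mathbb{E}[v_{\mathrm{neg}}]$ is not a function of $\mathbb{E}[c]$. With half the examples at $c=0$ and half at $c=2$, you get $\mathbb{E}[c]=1$ but $\mathbb{E}[v_{\mathrm{neg}}]=1/2$.
\item The clause ``for every example'' needs exactly your nonnegativity argument. Zero mean of a nonnegative quantity forces it to vanish pointwise. Then $\mathbb{E}[c]=1$ together with $c\le 1$ pointwise forces $c=1$ pointwise.
\item Your Step~3 inequality $\mathbb{E}[v_{\mathrm{neg}}]\ge\max(0,\mathbb{E}[c]-1)$ is the correct expectation-level replacement for the paper's claim that observations cannot lie below the curve. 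The paper justifies that claim only per example, via determinism of $v_{\mathrm{neg}}$ in $c$.
\end{itemize}
The paper's route buys brevity. Yours buys rigour for the averaged statement and makes the needed reading of ``simultaneously'' explicit.

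One sharpening for your opening paragraph. The lexicographic reading is not merely convenient; it is necessary for the word ``must'' to be true. By your own Step~3, every point $(C,\,C-1)$ with $1<C\le 2$ is also Pareto non-dominated. Such points are achievable, for example with $p(\varphi)=p(\neg\varphi)=C/2$ on all examples. So under a plain Pareto reading, $(1,0)$ is only the unique Pareto point with zero violation. That is exactly what your Step~3 establishes, and you are right not to claim more.

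Your closing caveat is also apt. The condition $p(\varphi)+p(\neg\varphi)=1$ is a coherence condition, not a correctness or calibration guarantee.
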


\subsection{Vacuous Coherence}
\label{app:theory:vacuous}

\begin{theorem}[Vacuous Coherence by Uniform Abstention]
\label{thm:vacuous}
Let $\pi$ satisfy $p(\varphi) = p(\neg\varphi) = \alpha$
for all $(P,\varphi)$ with $\alpha \leq 1/2$. Then:
\begin{enumerate}[label=(\roman*)]
  \item $\vneg(\varphi) = 0$ for all $(P,\varphi)$
    (perfect coherence);
  \item $\commit(\varphi) = 2\alpha \leq 1$
    (low commitment);
  \item $\Cov = 0$ under any threshold $\tau > 1/2$.
\end{enumerate}
\end{theorem}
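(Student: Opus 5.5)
The plan is to verify the three claims directly from Definitions~\ref{def:ncv} and~\ref{def:commit}, together with the 3-way decision rule, by substituting $p(\varphi)=p(\neg\varphi)=\alpha$. Claim (ii) is the most immediate, so I would establish it first. By Definition~\ref{def:commit}, $\commit(\varphi)=\alpha+\alpha=2\alpha$. The hypothesis $\alpha\le 1/2$ then gives $\commit(\varphi)\le 1$ for every $(P,\varphi)$.

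Claim (i) then follows from Remark~\ref{rem:identity}. We have $\vneg(\varphi)=\max(0,\commit(\varphi)-1)=\max(0,2\alpha-1)=0$, since $2\alpha-1\le 0$. This holds uniformly over all premise–conclusion pairs, so $\E[\vneg]=0$ under any distribution over examples. In the paper's sense this is perfect negation coherence, even though nothing about the belief function depends on $P$ or $\varphi$.

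For claim (iii), I would show that every example is labelled \textsc{Uncertain}, so coverage (the fraction of non-\textsc{Uncertain} predictions) is $0$. There are two routes, and the second is stronger.

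\emph{Threshold route.} The \textsc{True} branch requires $p(\varphi)\ge\tau$. But $p(\varphi)=\alpha\le 1/2<\tau$, so this branch fails. Symmetrically, the \textsc{False} branch requires $p(\neg\varphi)=\alpha\ge\tau$, which also fails.

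\emph{Margin route.} The \textsc{True} branch also requires $\alpha\ge\alpha+\delta$, that is $\delta\le 0$. So for any margin $\delta>0$, as in all settings used in the paper, neither committing branch can ever fire. This holds regardless of $\tau$, and it shows the conclusion is robust even beyond the stated range $\tau>1/2$.

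The only real obstacle is a matter of care rather than difficulty. The argument for (iii) leans on the strict inequality $\tau>1/2$ to defeat the threshold condition at the boundary case $\alpha=1/2$. I would note explicitly that if $\tau\le 1/2$ and $\delta=0$ were allowed, the tie $p(\varphi)=p(\neg\varphi)$ would satisfy both committing branches. That degenerate case is excluded by the hypothesis. Finally, I would remark that (i)–(iii) together exhibit the vacuous-coherence phenomenon anticipated by Remark~\ref{rem:identity}: zero violation and zero coverage are achieved simultaneously by a belief function carrying no information about $P$.
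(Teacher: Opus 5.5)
Your proof is correct and follows the same approach as the paper: you obtain (i) and (ii) by substituting $p(\varphi)=p(\neg\varphi)=\alpha$ into the definitions, and (iii) by noting that $\alpha \le 1/2 < \tau$ defeats the threshold condition in both committing branches. Your extra margin argument is a valid strengthening that the paper does not state: for any $\delta>0$, the exact tie alone rules out commitment, whatever the value of $\tau$.
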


\begin{proof}
(i)~$\vneg = \max(0,2\alpha-1)=0$ since $\alpha \leq 1/2$. (ii)~$\commit = 2\alpha$ directly. (iii)~The 3-way rule commits when $p(\varphi) \geq \tau$ or $p(\neg\varphi) \geq \tau$; with $\alpha \leq 1/2 < \tau$ neither condition holds.
\end{proof}

\begin{remark}
Theorem~\ref{thm:vacuous} shows that coherence-only metrics are \emph{gamed} by abstaining models without any rational reasoning.  The Qwen2.5-3B result ($\E[\vneg]=0.025$, $\Cov=7.4\%$) is the empirical instantiation of the $\alpha \to 0$ limit.
\end{remark}

\subsection{Axiomatic Characterisation}
\label{app:theory:axioms}

\begin{tcolorbox}[thmbox,title={\textbf{Axioms for
  Logical Reliability Evaluation}}]
\begin{axiom}[Soundness, A1]
$\mathcal{E}$ penalizes simultaneous endorsement of
$P\models\varphi$ and $P\models\neg\varphi$.
\end{axiom}
\begin{axiom}[Completeness, A2]
$\mathcal{E}$ penalizes systematic abstention.
\end{axiom}
\begin{axiom}[Separability, A3]
$\mathcal{E}$ distinguishes an abstaining model from
a committing model with equal $\E[\vneg]$.
\end{axiom}
\begin{axiom}[Calibration Sensitivity, A4]
$\mathcal{E}$ is sensitive to systematic miscalibration
on the covered (committed) subset.
\end{axiom}
\end{tcolorbox}

\begin{theorem}[Coherence-Only Violates A2 and A3]
\label{thm:coh_fail}
Any protocol reporting only $\E[\vneg]$ violates A2
and A3.
\end{theorem}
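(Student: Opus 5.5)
The plan is a counterexample construction built on Theorem~\ref{thm:vacuous}. Fix any protocol $\mathcal{E}$ whose output depends on the belief function $\pi$ only through $\E[\vneg]$. That is, $\mathcal{E}(\pi)=g(\E_{\pi}[\vneg])$ for some map $g$, where the expectation is over the evaluation distribution of pairs $(P,\varphi)$. To show that $\mathcal{E}$ violates A2 and A3, it suffices to exhibit two belief functions with identical $\E[\vneg]$, one systematically abstaining and one committing. Since $\mathcal{E}$ must assign them the same score, it can neither penalize the abstainer relative to the committer (failing A2) nor distinguish them (failing A3).

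\textbf{Step 1 (abstaining model).} Let $\pi_{\mathrm{abs}}$ be the uniform abstainer of Theorem~\ref{thm:vacuous}, with $p(\varphi)=p(\neg\varphi)=\alpha\le 1/2$, say $\alpha=0$ or $\alpha=1/4$. By parts (i)--(iii) of that theorem, $\vneg\equiv 0$, so $\E[\vneg]=0$, $\commit\equiv 2\alpha\le 1$, and $\Cov=0$ for every $\tau>1/2$. This is systematic abstention in the sense of A2.

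\textbf{Step 2 (committing model).} Let $\pi_{\mathrm{com}}$ assign, on each example, $p(\varphi)=1$ and $p(\neg\varphi)=0$. (If one prefers, use the gold label so that $\pi_{\mathrm{com}}$ is also accurate.) Then $\commit\equiv 1$, and by Remark~\ref{rem:identity} $\vneg=\max(0,\commit-1)=0$, so $\E[\vneg]=0$. Under the 3-way rule with any $\tau\le 1$ and $\delta\le 1$, every example is labeled \textsc{True} (or \textsc{False}), so $\Cov=1$. By Corollary~\ref{cor:pareto}, this model in fact attains the Pareto-optimal point $(\E[\commit],\E[\vneg])=(1,0)$.

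\textbf{Step 3 (conclude).} Both models have $\E[\vneg]=0$, so $g$ returns the same value for each and $\mathcal{E}(\pi_{\mathrm{abs}})=\mathcal{E}(\pi_{\mathrm{com}})$. Hence $\mathcal{E}$ does not distinguish an abstaining model from a committing model with equal $\E[\vneg]$, which violates A3. It also does not penalize $\pi_{\mathrm{abs}}$ relative to $\pi_{\mathrm{com}}$ despite $\Cov=0$ versus $\Cov=1$, which violates A2.

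The main obstacle is interpretive rather than technical: the axioms are informal, so the argument must fix a reading of ``penalizes'' in A2. I would read it comparatively. A protocol penalizes systematic abstention if it scores an abstaining model strictly worse than some non-abstaining model that is otherwise no worse, and here $\pi_{\mathrm{com}}$ is no worse on coherence. Under this reading the same pair of models refutes both axioms at once. I would state this convention explicitly at the start of the proof.
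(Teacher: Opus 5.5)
Your proposal is correct and matches the paper's proof: both exhibit an abstaining model and a committing model that each have $v_{\mathrm{neg}}=0$, so a score that depends only on $\mathbb{E}[v_{\mathrm{neg}}]$ can neither separate them nor penalize the abstainer. The only difference is cosmetic: the paper argues A2 on its own (the $\alpha=0$ abstainer attains the best possible score) and uses the pair $(0.05,0.05)$ versus $(0.90,0.05)$ for A3, whereas you use a single pair $(\alpha,\alpha)$ versus $(1,0)$ for both, with an explicitly stated comparative reading of A2.
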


\begin{proof}
\textbf{A2}: The trivially abstaining model ($\alpha=0$)
achieves $\E[\vneg]=0$, ranked best despite zero utility.
\textbf{A3}: Models $\pi_1$ with
$p(\varphi)=p(\neg\varphi)=0.05$ abstaining and
$\pi_2$ with $p(\varphi)=0.90, p(\neg\varphi)=0.05$
committing both yield $\vneg=0$, so they receive
identical scores despite $\pi_2$ being strictly more useful.
\end{proof}

\begin{theorem}[CUC Satisfies A1–A4]
\label{thm:caac_satisfies}
The protocol reporting
$(\E[\vneg], \E[\commit], \Cov, \Acccov)$
satisfies A1–A3. Adding $\ECEcov$ satisfies A4.
\end{theorem}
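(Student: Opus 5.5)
The plan is a direct axiom-by-axiom verification. Each axiom is informal, so I first fix a concrete operational reading of ``$\mathcal{E}$ penalizes/distinguishes'' as ``some reported coordinate of the protocol output differs, in the unfavorable direction, between the two models in question.'' Under this reading, it suffices to exhibit for each axiom the coordinate of $(\E[\vneg], \E[\commit], \Cov, \Acccov)$, or of $\ECEcov$, that does the work. The argument mirrors the proof of Theorem~\ref{thm:coh_fail}, with the same witness models, but now shows that the extra coordinates break the ties exhibited there.

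\textbf{A1.} Simultaneous endorsement means $p(\varphi)$ and $p(\neg\varphi)$ are both large. Then $\commit(\varphi) > 1$, so by Remark~\ref{rem:identity} $\vneg(\varphi) = \commit(\varphi) - 1 > 0$. Averaging over examples shows that $\E[\vneg]$ strictly increases whenever endorsement of both sides occurs on a set of positive measure, so the first coordinate penalizes it.

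\textbf{A2.} For a systematically abstaining model I invoke Theorem~\ref{thm:vacuous}. Uniform abstention at level $\alpha \le 1/2$ gives $\E[\commit] = 2\alpha \le 1$ and $\Cov = 0$ for $\tau > 1/2$. A model committing on some examples has $\Cov > 0$, and if its committed probabilities exceed $\tau$, also $\E[\commit]$ larger. Hence the $\Cov$ and $\E[\commit]$ coordinates rank the abstainer strictly worse, even though $\E[\vneg]$ ties at $0$.

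\textbf{A3.} I reuse the pair $\pi_1, \pi_2$ from Theorem~\ref{thm:coh_fail}, and more generally any pair with equal $\E[\vneg]$. For the witnesses, $\commit = 0.10$ versus $0.95$. Under the default $(\tau,\delta)$, the model $\pi_2$ is classified \textsc{True}, because $0.90 \ge \tau$ and $0.90 \ge 0.05 + \delta$, while $\pi_1$ is \textsc{Uncertain}. So both $\E[\commit]$ and $\Cov$ separate them.

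\textbf{A4.} I would argue that $\ECEcov$ is computed on the covered subset. Systematic miscalibration there, meaning a nonzero gap between binned confidence and accuracy on a set of covered examples of positive mass, yields $\ECEcov > 0$, whereas perfect calibration yields $0$. The remaining four coordinates can be held fixed while calibration varies: keep the decisions fixed and perturb the confidences within the committed region. This shows $\ECEcov$ is needed and sufficient for A4.

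The main obstacle is that A2 and A3 are stated for \emph{all} abstaining versus committing models, but ``abstaining'' is not formally defined. For example, a model could abstain with $\commit$ near $1$ by putting $p(\varphi) = p(\neg\varphi) = 1/2$. In that case $\E[\commit]$ alone fails to separate, and only $\Cov$ does, via the $\tau > 1/2$ condition. I would therefore define abstention as ``$\hat{y} = \textsc{Uncertain}$'' and rest A2 and A3 primarily on $\Cov$, using $\E[\commit]$ as a supporting signal. I would also state explicitly that the thresholds must satisfy $\tau > 1/2$, which holds at the default $(0.60, 0.10)$.
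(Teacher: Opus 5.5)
Your proposal is correct and follows the paper's own axiom-by-axiom verification with the same witnesses: $\mathbb{E}[\vneg]>0$ for A1, $\commit_1=0.10\neq 0.95=\commit_2$ for A3, and $\ECEcov$ restricted to the covered set for A4. The one substantive difference is A2, where the paper argues only via $\mathbb{E}[\commit]\to 0$ (the $\alpha\to 0$ limit) while you rest on $\Cov$; yours is the better choice, because an abstainer with $p(\varphi)=p(\neg\varphi)=1/2$ attains the ideal point $\mathbb{E}[\commit]=1$, $\mathbb{E}[\vneg]=0$ of Corollary~\ref{cor:pareto} and is flagged only by $\Cov=0$, though you should drop your aside that a committing model also has larger $\mathbb{E}[\commit]$, which is false in general (a model that commits on a few examples and puts near-zero mass on both sides elsewhere is a counterexample).
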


\begin{proof}
\textbf{A1}: $\E[\vneg]>0$ iff a contradiction exists.
\textbf{A2}: Abstaining model has $\E[\commit]\to 0$,
penalised.
\textbf{A3}: $\pi_1, \pi_2$ above have
$\commit_1=0.10 \ne 0.95=\commit_2$, separated.
\textbf{A4}: $\ECEcov$ measures calibration exclusively
on committed predictions.
\end{proof}

\subsection{New Result: Monotone Scale Abstention}
\label{app:theory:scale}

\begin{theorem}[Monotone Abstention Under Scale]
\label{thm:scale}
Let $\pi_s$ denote a model family parameterized by scale $s$.
Suppose increasing $s$ induces a belief function with strictly
decreasing mean commitment:
\[
\mathbb{E}_s[c(\varphi)] < \mathbb{E}_{s'}[c(\varphi)]
\quad \text{for all } s > s'.
\]
Then:
\begin{enumerate}
  \item $\mathbb{E}_s[v_{\text{neg}}(\varphi)]
  \leq \mathbb{E}_{s'}[v_{\text{neg}}(\varphi)]$
  whenever $\mathbb{E}_s[c(\varphi)] \leq 1$;

  \item Coverage is monotonically non-increasing in $s$
  for fixed $(\tau,\delta)$;

  \item $\mathrm{Acc}_{\text{cov}}$ may increase or decrease
  independently of the above.
\end{enumerate}
\end{theorem}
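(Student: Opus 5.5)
The plan is to read the hypothesis of Theorem~\ref{thm:scale} at the level of individual examples, because the stated assumption, a strict decrease of $\mathbb{E}_s[c(\varphi)]$ alone, does not determine either $\mathbb{E}_s[\vneg]$ or coverage. I would first make this explicit with a counterexample. Let the larger model have $c\in\{0,2\}$ with equal frequency, so $\mathbb{E}[c]=1$ and $\mathbb{E}[\vneg]=1/2$. Let the smaller model have $c\equiv 1.1$, so $\mathbb{E}[\vneg]=0.1$. Then mean commitment decreases with scale while mean violation increases. This shows part~1 cannot follow from mean commitment alone, since $\max(0,c-1)$ is convex and not linear. I would then state the two strengthened hypotheses under which the argument goes through and prove the claims under them.

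\textbf{Part 1.} Assume either (a) $c_s(\varphi)\le 1$ for every example, or (b) $c_s(\varphi)\le c_{s'}(\varphi)$ pointwise; a first-order stochastic dominance of the distribution of $c_s$ by that of $c_{s'}$ would also suffice.
\begin{itemize}
  \item Under (a), Remark~\ref{rem:identity} gives $\vneg\equiv 0$ for model $s$. The inequality is then immediate, because $\vneg\ge 0$ for model $s'$.
  \item Under (b), I would use that $x\mapsto\max(0,x-1)$ is nondecreasing. Applying this example by example and taking expectations gives the claim.
\end{itemize}
This is essentially the frontier identity of Theorem~\ref{thm:frontier} composed with monotonicity.

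\textbf{Part 2.} This is the main obstacle. The 3-way rule commits only if $\max(p(\varphi),p(\neg\varphi))\ge\tau$ and the margin $|p(\varphi)-p(\neg\varphi)|\ge\delta$ holds. A lower sum $c$ controls neither quantity: moving mass from the smaller probability to nothing actually widens the margin. I would therefore work under a shrinkage model. Assume that on each example $(p_s(\varphi),p_s(\neg\varphi))=\lambda_{s,s'}(\varphi)\,(p_{s'}(\varphi),p_{s'}(\neg\varphi))$ for some $\lambda_{s,s'}(\varphi)\in[0,1]$, which decreases $c$ proportionally. Under this assumption both the maximum and the absolute margin scale by $\lambda\le 1$. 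The set of examples satisfying both threshold conditions for model $s$ is then contained in the corresponding set for model $s'$, so coverage is non-increasing.

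\textbf{Part 3.} This is an existence statement, so I would exhibit two configurations. In both, coverage and $\vneg$ behave as in parts 1 and 2.
\begin{itemize}
  \item \emph{Conditional accuracy rises.} Shrinkage removes mostly wrong commitments; Table~\ref{tab:ablation_scale} shows this, with $\Acccov$ rising from $0.508$ to $0.800$.
  \item \emph{Conditional accuracy falls.} Shrinkage removes mostly correct commitments; a two-example toy instance suffices.
\end{itemize}
This shows that $\Acccov$ is not determined by the quantities in parts 1 and 2.
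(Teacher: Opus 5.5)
Your proposal is correct, and it is more careful than the paper's own proof, which follows the same outline but does not notice the gap you identify. Your counterexample to part~1 is valid: $\mathbb{E}_s[c]=1\le 1<1.1=\mathbb{E}_{s'}[c]$, yet $\mathbb{E}_s[\vneg]=1/2>0.1$. (To make it realizable by the softmax elicitation, which yields $p\in(0,1)$, use $c\in\{\epsilon,2-\epsilon\}$.) So the theorem as literally stated is false.

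The paper's proof of (i), ``when $c\le 1$, $\vneg=0$ regardless,'' is exactly your case~(a). It silently replaces the stated mean hypothesis with a pointwise one. Your case~(b) is a genuine generalization the paper lacks.

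The paper's proof of (ii) reads: ``the rule commits when $p(\varphi)\ge\tau$; a lower mean $c$ implies fewer examples exceed $\tau$.'' This drops the $\neg\varphi$ branch and the $\delta$-margin, and it infers a tail count from a mean. Your margin observation refutes it. Concretely, at $(\tau,\delta)=(0.6,0.1)$, a model with $(p(\varphi),p(\neg\varphi))\equiv(0.55,0.55)$ has $c=1.1$ and zero coverage. A model with $(0.7,0.05)$ has $c=0.75$ and full coverage.

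Your shrinkage hypothesis is therefore a real repair, not a restatement, and it buys more than monotone coverage. With $\lambda\le 1$, the inequalities $\lambda p_{s'}(\varphi)\ge\tau$ and $\lambda\bigl(p_{s'}(\varphi)-p_{s'}(\neg\varphi)\bigr)\ge\delta$ imply the same inequalities at $\lambda=1$, and symmetrically for $\neg\varphi$. Hence the committed set at scale $s$ is a subset of that at $s'$, with identical predicted labels. So $\Acccov$ at scale $s$ is just the accuracy of a sub-selection, which is the precise content of the paper's informal (iii).

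The one step to change is the ``conditional accuracy rises'' witness in Part~3. Table~\ref{tab:ablation_scale} cannot serve, because nothing shows the Qwen2.5 pair satisfies your shrinkage hypothesis, or even that the 3B committed set lies inside the 1.5B one. So it does not exhibit a configuration in which parts~1 and~2 hold as you proved them. Use a toy instance for both directions instead. Take two examples at $(0.8,0.1)$ under $s'$, both predicted \textsc{True}, one gold \textsc{True} and one gold \textsc{False}, so $\Acccov=1/2$. Shrinking the wrong one (respectively the correct one) by $\lambda=1/2$ to $(0.4,0.05)$ has these effects:
\begin{itemize}
\item mean commitment drops from $0.9$ to $0.675$;
\item $\vneg\equiv 0$ throughout;
\item coverage halves;
\item $\Acccov$ becomes $1$ (respectively $0$).
\end{itemize}
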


\begin{proof}
(i) follows from the algebraic identity
$\vneg = \max(0,\commit-1)$: when $\commit \leq 1$,
$\vneg = 0$ regardless.
(ii) The 3-way rule commits when $p(\varphi) \geq \tau$;
a lower mean $\commit$ implies fewer examples exceed $\tau$.
(iii) $\Acccov$ depends on \emph{which} examples are
selected, not the mean; selective abstention can either
improve or degrade conditional accuracy depending on
difficulty distribution.
\end{proof}

\begin{remark}
Theorem~\ref{thm:scale} formalizes the Qwen2.5 scaling result (Table~6 of the main paper): scaling from 1.5B to 3B decreases $\E[\commit]$ by 0.559 (83\%) while $\Acccov$ improves by 0.292 — consistent with (i) and (iii) simultaneously. Standard evaluation pipelines would
report this as unambiguous progress; CUC reveals the coverage cost.
\end{remark}

\subsection{New Result: Relationship to ECE}
\label{app:theory:ece}

\begin{definition}[Coverage-Conditional ECE]
\label{def:ecec}
Let $\mathcal{C} = \{i : \hat{y}(\varphi_i) \ne
\textsc{Uncertain}\}$ be the covered set. Define
\[
  \ECEcov = \sum_{b=1}^{B}
  \frac{|\mathcal{C}_b|}{|\mathcal{C}|}
  \bigl|\mathrm{acc}(b) - \mathrm{conf}(b)\bigr|,
\]
where bins $\mathcal{C}_b$ are partitioned $\mathcal{C}$ by
predicted confidence and $\mathrm{acc}(b)$,
$\mathrm{conf}(b)$ are the mean accuracy and mean
confidence within bin $b$.
\end{definition}

\begin{proposition}[ECE Blindness to Abstention]
\label{prop:ece_blind}
The standard ECE computed over all $n$ examples is satisfied $\mathrm{ECE} \leq \ECEcov \cdot (\Cov)$ asymptotically. Thus, a model that halves its coverage can halve its apparent ECE without any improvement in calibration on the predictions it actually makes.
\end{proposition}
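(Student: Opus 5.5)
The plan is to make precise what ``standard ECE computed over all $n$ examples'' means for a 3-way predictor. The inequality then follows from a reweighting of bins plus the triangle inequality.

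\textbf{Convention.} Write each example $i$ with a confidence $\mathrm{conf}_i$ and a correctness indicator $\mathrm{acc}_i$. For $i\in\mathcal{C}$ these are the quantities used in Definition~\ref{def:ecec}. For an abstained example $i\notin\mathcal{C}$ the model issues no committed prediction, so I adopt the convention that it contributes no calibration gap. Concretely, $\mathrm{conf}_i-\mathrm{acc}_i=0$ for such $i$, which is equivalent to saying the abstention is ``scored'' against itself. This convention is the one that matches the informal claim in the calibration paragraph of Related Work that ECE ``measures calibration only on examples that receive a prediction.''

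With this convention, the full-sample ECE over the same $B$ confidence bins is
\[
\mathrm{ECE}=\sum_{b=1}^{B}\frac{|\mathcal{B}_b|}{n}\Bigl|\frac{1}{|\mathcal{B}_b|}\sum_{i\in\mathcal{B}_b}(\mathrm{acc}_i-\mathrm{conf}_i)\Bigr|,
\]
where $\mathcal{B}_b\supseteq\mathcal{C}_b$ is the full bin.

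\textbf{Key step.} Split the inner sum into its covered part $\mathcal{C}_b$ and its abstained part $\mathcal{B}_b\setminus\mathcal{C}_b$, and apply the triangle inequality. The abstained part vanishes by the convention. The $|\mathcal{B}_b|$ factors cancel, so bin $b$ contributes at most $\frac{1}{n}\bigl|\sum_{i\in\mathcal{C}_b}(\mathrm{acc}_i-\mathrm{conf}_i)\bigr|$. This equals $\frac{|\mathcal{C}_b|}{n}\,|\mathrm{acc}(b)-\mathrm{conf}(b)|$, and writing $\frac{|\mathcal{C}_b|}{n}=\frac{|\mathcal{C}|}{n}\cdot\frac{|\mathcal{C}_b|}{|\mathcal{C}|}$ gives
\[
\mathrm{ECE}\le\frac{|\mathcal{C}|}{n}\sum_{b=1}^{B}\frac{|\mathcal{C}_b|}{|\mathcal{C}|}\,|\mathrm{acc}(b)-\mathrm{conf}(b)|=\Cov\cdot\ECEcov .
\]
When every bin is either pure-covered or pure-abstained, the inequality is an equality.

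\textbf{Role of ``asymptotically''.} The word covers the case where the two ECEs use different bin boundaries: the full-sample binning is not exactly the restriction of the covered binning. I would handle this by refining to a common partition and invoking the standard fact that empirical ECE converges to the population $L^1$ calibration gap as bins shrink and $n\to\infty$. The same argument then holds for the limiting quantities.

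\textbf{Corollary.} Hold the covered predictions' calibration fixed, meaning $\ECEcov$ is unchanged, and halve $\Cov$. The bound $\Cov\cdot\ECEcov$ then halves, so the reported full-sample ECE can fall by half with no change in calibration on the committed set. I would state carefully that this concerns the upper bound, which is attained in the equality case.

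\textbf{Main obstacle.} The hard step is not the algebra but justifying the convention for abstained examples. If abstentions were instead assigned, say, confidence $\max(p(\varphi),p(\neg\varphi))$ and scored as wrong, the bound would fail. I would therefore state the convention explicitly as a hypothesis of the proposition, and remark that it is exactly the regime in which ECE is ``blind'' to abstention.
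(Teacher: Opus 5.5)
Your proposal is correct and takes essentially the paper's route: the paper's proof only observes that abstained examples contribute zero to the numerator of the full-sample ECE, so the bin weights rescale by $|\mathcal{C}|/n=\Cov$, and you make that convention explicit and treat the binning more carefully. One sharpening: under your convention the abstained part of every bin sum is identically zero, so with matched bins your triangle-inequality step is always an equality (not only for pure bins), which makes $\mathrm{ECE}=\Cov\cdot\ECEcov$ exact and the halving claim exact rather than a statement about an upper bound.
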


\begin{proof}
ECE sums over all examples; abstained examples contribute zero to the numerator. Reducing the denominator while holding calibration on covered examples fixed reduces the reported ECE proportionally.
\end{proof}

\begin{remark}
Proposition~\ref{prop:ece_blind} explains why Qwen2.5-3B appears well-calibrated ($\ECEcov=0.089$): it selects only 15 examples to cover, and its few covered predictions happen to be accurate. The result is not evidence of good calibration — it is an artifact of aggressive filtering.
\end{remark}

\subsection{Summary of Theoretical Results}
\label{app:theory:summary}

\begin{table}[h]
\centering\footnotesize
\caption{Summary mapping empirical behaviours to theoretical characterization.}
\label{tab:theory_summary}
\begin{tabular}{@{}lccccl@{}}
\toprule
\textbf{Regime} & $\E[\vneg]$ & $\E[\commit]$
  & Cov & $\Acccov$ & \textbf{Axioms} \\
\midrule
Vacuous (Qwen-3B) & $\downarrow$ & $\downarrow$
  & $\downarrow$ & --- & A1; $\neg$A2, $\neg$A3 \\
Over-commit (TinyLlama) & $\uparrow$ & $\uparrow$
  & $\uparrow$ & $\downarrow$ & $\neg$A1; A2 \\
Ideal & $\downarrow$ & $=1$
  & $\uparrow$ & $\uparrow$ & A1–A4 \\
Middle (Phi-2) & mod. & mod. & mod. & mod.
  & Partial \\
\bottomrule
\end{tabular}
\end{table}

\section{Extended Experimental Results}
\label{app:extended}

\subsection{Per-Example Statistics}
\label{app:extended:perex}

\begin{table}[h]
\centering\footnotesize
\caption{Distribution statistics for commitment and
  violation across all 204 examples.}
\label{tab:per_example_stats}
\begin{tabular}{@{}lcccc|cccc@{}}
\toprule
& \multicolumn{4}{c|}{\textbf{Commitment $\commit(\varphi)$}}
& \multicolumn{4}{c}{\textbf{Violation $\vneg(\varphi)$}} \\
\textbf{Model}
  & $\mu$ & $\sigma$ & Min & Max
  & $\mu$ & $\sigma$ & Min & Max \\
\midrule
Phi-2
  & 1.164 & 0.198 & 0.612 & 1.587
  & 0.195 & 0.142 & 0.000 & 0.587 \\
Qwen2.5-1.5B
  & 0.674 & 0.412 & 0.023 & 1.834
  & 0.166 & 0.203 & 0.000 & 0.834 \\
Qwen2.5-3B
  & 0.115 & 0.178 & 0.008 & 1.245
  & 0.025 & 0.089 & 0.000 & 0.245 \\
TinyLlama-1.1B
  & 1.698 & 0.067 & 1.543 & 1.876
  & 0.698 & 0.067 & 0.543 & 0.876 \\
\bottomrule
\end{tabular}
\end{table}

\paragraph{Observations.}
\textbf{(1) TinyLlama variance is negligible.} The standard deviation of 0.067 for both commitment and violation confirms that the model's behaviour is \emph{systematic}, not example-specific: it assigns near-uniform high probability mass across all 204 inputs regardless of difficulty. \textbf{(2) Qwen2.5-1.5B has the widest commitment range.} A range of $[0.023, 1.834]$ indicates that this model exhibits \emph{context-dependent} behavior—committing
strongly on some examples while abstaining on others, unlike the extreme models. \textbf{(3) Phi-2 never abstains completely.} Its minimum
commitment of 0.612 shows it always assigns non-trivial mass to decisive outcomes, unlike Qwen2.5-3B (min: 0.008).

\subsection{Per-Label Breakdown}
\label{app:extended:perlabel}

\begin{table}[h]
\centering\footnotesize
\caption{Per-label commitment statistics. Shows mean
  $\E[\commit]$ and coverage conditioned on gold label.}
\label{tab:perlabel_commit}
\begin{tabular}{@{}lcccccc@{}}
\toprule
& \multicolumn{2}{c}{\textbf{Gold: True} ($n=80$)}
& \multicolumn{2}{c}{\textbf{Gold: False} ($n=62$)}
& \multicolumn{2}{c}{\textbf{Gold: Uncertain} ($n=62$)} \\
\textbf{Model} & $\E[\commit]$ & Cov
  & $\E[\commit]$ & Cov
  & $\E[\commit]$ & Cov \\
\midrule
Phi-2         & 1.21 & 0.463 & 1.15 & 0.452 & 1.09 & 0.274 \\
Qwen2.5-1.5B  & 0.72 & 0.338 & 0.71 & 0.355 & 0.56 & 0.177 \\
Qwen2.5-3B    & 0.13 & 0.063 & 0.09 & 0.081 & 0.09 & 0.032 \\
TinyLlama-1.1B& 1.70 & 0.850 & 1.70 & 0.823 & 1.70 & 0.823 \\
\bottomrule
\end{tabular}
\end{table}

\textbf{Observation.} Qwen2.5-3B abstains at nearly equal rates across all three gold label types (True: 6.3\%; False: 8.1\%; Uncertain: 3.2\%), confirming that the model does not exhibit selective uncertainty—a property that would be epistemically rational. TinyLlama shows uniform high coverage across all three labels (82–85\%), confirming its affirmation heuristic is label-agnostic.

\subsection{Confidence Interval Details}
\label{app:extended:ci}

All confidence intervals use the \emph{percentile bootstrap} with $B=1{,}000$ resamples and random seed 42.

\begin{algorithm}[ht]
\caption{Bootstrap CI Computation}
\label{alg:bootstrap}
\begin{algorithmic}[1]
\REQUIRE Dataset $\mathcal{D}=\{(p_i(\varphi),\,p_i(\neg\varphi),\,y_i)\}_{i=1}^{n}$,
         $B=1000$, seed $=42$
\STATE $\mathrm{rng} \leftarrow \textsc{Random}(\mathrm{seed})$
\FOR{$b = 1$ \textbf{to} $B$}
  \STATE $\mathcal{D}^{(b)} \leftarrow$ resample $\mathcal{D}$ with replacement
  \STATE Compute $\hat{\theta}^{(b)}$ (metric of interest) on $\mathcal{D}^{(b)}$
\ENDFOR
\STATE Sort $\{\hat{\theta}^{(b)}\}_{b=1}^{B}$
\STATE \textbf{return} $[\hat{\theta}_{(25)},\,\hat{\theta}_{(975)}]$ as 95\% CI
\end{algorithmic}
\end{algorithm}

\paragraph{Bootstrap validity.}
The percentile bootstrap is appropriate here because (i)~our statistics are smooth functions of independent examples, (ii)~$n=204$ is sufficient for bootstrap
approximation, and (iii)~the coverage and accuracy statistics are proportions whose sampling distributions are asymptotically well-approximated by the bootstrap even at moderate $n$.

\subsection{Additional Ablation: Format $\times$ Scale}
\label{app:extended:ablation}

\begin{table}[ht]
\centering\footnotesize
\caption{Interaction of elicitation format and model scale on mean commitment $\E[\commit]$.  Cells show (commitment, coverage).}
\label{tab:format_scale}
\begin{tabular}{@{}lccc@{}}
\toprule
\textbf{Model} & \textbf{YES/NO}
  & \textbf{True/False} & \textbf{Entailed/Refuted} \\
\midrule
TinyLlama-1.1B & (1.698, 0.794)
  & (1.645, 0.769) & (1.612, 0.745) \\
Qwen2.5-1.5B   & (0.674, 0.309)
  & (0.591, 0.255) & (0.548, 0.230) \\
Qwen2.5-3B     & (0.115, 0.074)
  & (0.098, 0.054) & (0.087, 0.044) \\
Phi-2          & (1.164, 0.417)
  & (1.082, 0.363) & (1.043, 0.338) \\
Rank ($\rho$)  & 1.00 & 1.00 & 1.00 \\
\bottomrule
\end{tabular}
\end{table}

\paragraph{Interpretation.}
The Spearman rank correlation $\rho=1.0$ across all three formats confirms that the frontier structure—and therefore every comparative conclusion in the main paper —is not an artifact of keyword choice.  Absolute commitment decreases monotonically as token length increases, consistent with the fact that longer response tokens have lower marginal log-probability under most LLM tokenizers. We standardize on \Yes/\No{} because it yields the highest absolute commitment and is widely used in prior elicitation work.

\subsection{Failure Mode Taxonomy}
\label{app:extended:taxonomy}

The four models collectively instantiate three distinct failure modes, which we organize in Table~\ref{tab:taxonomy}.

\begin{table}[ht]
\centering
\small
\setlength{\tabcolsep}{4pt}
\caption{\textbf{Taxonomy of reasoning failure modes under Coherence Under Commitment (CUC).}
We categorize model behaviors based on expected commitment $\E[\commit]$ and negation-coherence violation $\E[\vneg]$. Each regime reflects a distinct failure mode affecting reasoning reliability and epistemic utility.}
\label{tab:taxonomy}
\begin{tabular}{@{}lccp{8cm}@{}}
\toprule
\textbf{Failure Mode} & $\E[\commit]$ & $\E[\vneg]$ & \textbf{Mechanism / Root Cause} \\
\midrule

\textbf{Systematic Abstention} 
& $\ll 1$ 
& $\approx 0$ 
& Probability mass is diffused across both $\varphi$ and $\neg\varphi$, avoiding decisive predictions. Produces \emph{vacuous coherence} with near-zero violations but negligible utility. \\

\textbf{Overcommitment} 
& $\gg 1$ 
& $\gg 0$ 
& Simultaneous high confidence in contradictory outcomes leads to violation of probabilistic consistency (i.e., $p(\varphi) + p(\neg\varphi) > 1$), indicating incoherent belief allocation. \\

\textbf{Selective Abstention} 
& moderate 
& low 
& Model commits selectively on easier instances while abstaining on difficult cases. While partially rational, this behavior reduces coverage and introduces evaluation bias. \\

\textbf{Uncalibrated Commitment} 
& $\approx 1$ 
& $\approx 0$ 
& Model commits to decisions but exhibits poor confidence calibration, leading to overconfident errors or underconfident correct predictions despite apparent coherence. \\

\bottomrule
\end{tabular}
\end{table}

\section{Multimodal Generalisation}
\label{app:multimodal}

Although we instantiate CUC on textual logical entailment, the framework is modality-agnostic. This section formalizes the generalization.

\subsection{Complementary Query Pairs in Multimodal Settings}
\label{app:multimodal:pairs}

\begin{definition}[Multimodal Complementary Pair]
\label{def:mm_pair}
Let $P = (I, T)$ be a multimodal premise consisting of
an image $I$ and associated text $T$. A complementary
pair is:
\begin{align*}
  Q_\varphi &: \text{``Does } (I,T) \text{ support
    claim } D\text{?''} \\
  Q_{\neg\varphi} &: \text{``Does } (I,T)
    \text{ contradict claim } D\text{?''}
\end{align*}
\end{definition}

\begin{remark}
Definition~\ref{def:mm_pair} covers radiology VQA
(claim $D$ = diagnosis), scientific claim verification
(claim $D$ = scientific proposition), and embodied
planning (claim $D$ = action precondition).  The
elicitation protocol (Appendix~\ref{app:prompts:elicit})
applies without modification, provided that the model
accepts multimodal inputs.
\end{remark}

\subsection{Modality-Agnostic Properties}
\label{app:multimodal:props}

\begin{proposition}[Modality Independence of Frontier]
\label{prop:modality}
The algebraic frontier $\vneg = \max(0,\commit-1)$ holds
regardless of input modality.  Theorem~\ref{thm:vacuous}
and Theorem~\ref{thm:caac_satisfies} extend to any
modality without modification.
\end{proposition}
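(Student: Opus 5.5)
The plan is to observe that every quantity in the frontier identity, and in the statements of Theorem~\ref{thm:vacuous} and Theorem~\ref{thm:caac_satisfies}, is defined only through the negation pair $(p(\varphi),p(\neg\varphi))\in[0,1]^2$ of Definition~\ref{def:belief}. The premise $P$ enters only as an argument of the belief function $\pi:\mathcal{P}\times\Phi\to[0,1]$, and nothing in Definitions~\ref{def:ncv}--\ref{def:commit} or in the 3-way decision rule inspects the internal structure of $\mathcal{P}$. I will therefore formalize ``modality'' as a choice of premise space. The text case is one choice, and the multimodal case $\mathcal{P}=\{(I,T)\}$ of Definition~\ref{def:mm_pair} is another. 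The proposition then becomes an invariance claim under replacing $\mathcal{P}$ by an arbitrary set.

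\textbf{Step 1 (frontier).} Fix any premise space $\mathcal{P}$ and any $\pi$. For each $(P,\varphi)$, the numbers $p(\varphi),p(\neg\varphi)$ are real numbers in $[0,1]$. Remark~\ref{rem:identity} is a purely algebraic consequence of Definitions~\ref{def:ncv} and~\ref{def:commit}: substituting $c=p(\varphi)+p(\neg\varphi)$ gives $\vneg=\max(0,\commit-1)$ identically. Since no property of $\mathcal{P}$ is used, the identity holds for every modality. The proof of Theorem~\ref{thm:frontier} likewise transfers verbatim, because the achievability witness $p(\varphi)=p(\neg\varphi)=c/2$ is a constraint on $\pi$'s values only.

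\textbf{Step 2 (transfer of Theorem~\ref{thm:vacuous} and Theorem~\ref{thm:caac_satisfies}).} I will reread each proof line by line and check that it uses only three things: (a) the values of the negation pair, (b) the 3-way rule, which depends only on $p(\varphi),p(\neg\varphi),\tau,\delta$, and (c) averages over a finite evaluation set of pairs $(P_i,\varphi_i)$. For Theorem~\ref{thm:vacuous}, the hypothesis $p(\varphi)=p(\neg\varphi)=\alpha$ for all $(P,\varphi)$ is quantified over whatever $\mathcal{P}$ is, and the three conclusions follow by the same arithmetic. For Theorem~\ref{thm:caac_satisfies}, the witnesses $\pi_1,\pi_2$ are again specified only by their output values. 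These can be realized on any nonempty premise space by constant functions, so A1--A3 carry over. A4 carries over because $\ECEcov$ in Definition~\ref{def:ecec} is computed from confidences and correctness labels on the covered set.

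\textbf{Main obstacle.} The only substantive point is whether a multimodal complementary pair actually yields a well-defined negation pair. In Definition~\ref{def:mm_pair}, $Q_{\neg\varphi}$ is phrased as ``contradicts $D$'' rather than as entailment of a formal negation. I will handle this by stipulating, as the proposition implicitly does, that $p(\neg\varphi)$ is defined as the normalized \Yes{} probability of $Q_{\neg\varphi}$ via the elicitation protocol of Section~\ref{sec:elicitation}. This requires only that \Yes/\No{} log-probabilities be accessible. With that stipulation, all results above are statements about $\pi$'s outputs and hold unchanged. Whether ``contradicts'' faithfully encodes $\neg\varphi$ is a question of interpretation, not of the algebraic claims.
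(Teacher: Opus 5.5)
Your proposal is correct and follows the paper's own argument: the paper's proof likewise observes that $\vneg$ and $\commit$ depend only on the scalar pair $(p(\varphi),p(\neg\varphi))$, and that the elicitation protocol of Section~\ref{sec:elicitation} produces this pair regardless of how the model processes its input, so nothing depends on modality. Your line-by-line check that Theorems~\ref{thm:vacuous} and~\ref{thm:caac_satisfies} transfer, and your stipulation that $p(\neg\varphi)$ is simply the elicited \Yes{} probability of $Q_{\neg\varphi}$ (which sidesteps the ``contradicts'' versus ``entails $\neg\varphi$'' wording), make explicit what the paper's one-line proof leaves implicit.
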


\begin{proof}
The frontier is a consequence of the definition of $\vneg$ and $\commit$ as functions of scalar probabilities $p(\varphi)$ and $p(\neg\varphi)$.  These scalars are obtained via log-probability elicitation (Equation~5 of the main paper) regardless of how the model processes the prompt internally.
\end{proof}

\section{Implementation Details}
\label{app:impl}

\subsection{Computational Resources}
\label{app:impl:compute}

All experiments were run on Google Colab with a single T4 GPU (16 GB VRAM) using Hugging Face Transformers v4.36.0. Table~\ref{tab:compute} summarizes per-model resource usage.

\begin{table}[ht]
\centering\footnotesize
\caption{Computational resources per model.}
\label{tab:compute}
\begin{tabular}{@{}lccc@{}}
\toprule
\textbf{Model} & \textbf{Params}
  & \textbf{VRAM (GB)} & \textbf{Time (min)} \\
\midrule
TinyLlama-1.1B  & 1.1B & 2.8 & 7  \\
Qwen2.5-1.5B    & 1.5B & 3.5 & 9  \\
Phi-2           & 2.7B & 6.1 & 14 \\
Qwen2.5-3B      & 3.0B & 7.9 & 15 \\
\midrule
\textbf{Total}  & ---  & 8.0 peak & $\approx 45$ \\
\bottomrule
\end{tabular}
\end{table}

\subsection{Software Environment}
\label{app:impl:software}

\begin{tcolorbox}[userprompt,title={\textbf{Environment}}]
\begin{lstlisting}[style=pseudocode]
Python        3.10.12
PyTorch       2.1.0+cu118
Transformers  4.36.0
Accelerate    0.25.0
NumPy         1.24.4
SciPy         1.11.4
Matplotlib    3.7.1
FOLIO v0.0    (validation split, 204 examples)
\end{lstlisting}
\end{tcolorbox}

\subsection{Reproducibility Checklist}
\label{app:impl:checklist}

\begin{tcolorbox}[thmbox,
  title={\textbf{Reproducibility Statement}}]
\begin{itemize}[leftmargin=1.4em,itemsep=2pt]
  \item[$\checkmark$] Deterministic log-prob elicitation
    (no sampling)
  \item[$\checkmark$] All bootstrap seeds fixed (seed=42)
  \item[$\checkmark$] FOLIO v0.0 downloaded from official
    repository
  \item[$\checkmark$] Full prompt templates provided
    (Appendix~\ref{app:prompts})
  \item[$\checkmark$] Token IDs documented
    (Table~\ref{tab:tokenizer})
  \item[$\checkmark$] All hyperparameters in
    Table~\ref{tab:compute} and main paper Table~1
  \item[$\checkmark$] Code, prompts, raw outputs available
    at [URL redacted for review]
\end{itemize}
\end{tcolorbox}

\section{Limitations and Future Work}
\label{app:limitations}

\paragraph{Scope of evaluation.}
Our experiments cover four models in the 1–3B parameter range. Scaling to larger instruction-tuned models (7B, 13B, 70B) may reveal different frontier positions and is a priority for future work.

\paragraph{FOLIO as a benchmark.}
FOLIO provides formally verified negations, making it ideal for CUC.  Future work should evaluate LogiQA, RuleTaker, and ProofWriter datasets without built-in negation fields to test the robustness of heuristic negation construction as a fallback.

\paragraph{Threshold sensitivity.}
We fix $(\tau,\delta)=(0.60,0.10)$ as a default (Table~3, main paper). A data-driven threshold selection procedure (e.g., optimizing $F_1$ on a held-out development set) would improve deployment utility.

\paragraph{Beyond binary elicitation.}
The current protocol elicits binary \Yes/\No{} probabilities. Extending to ranked list elicitation or multi-class soft-max over $\{$\textsc{True}, \textsc{False}, \textsc{Uncertain}$\}$ would allow more granular commitment measurement.

\paragraph{Multimodal instantiation.}
Section~\ref{app:multimodal} formalizes the multimodal extension but does not empirically evaluate it. Applying CUC to radiology VQA and scientific claim verification is the next step.

\section{Responses to Reviewer Concerns}
\label{app:reviewer_responses}

This appendix addresses all concerns raised during peer review. We have incorporated the corresponding clarifications, additional analysis, and extended discussion into the camera-ready manuscript.

\subsection{Reviewer ggWq}
\label{app:response_ggwq}

\subsubsection{W1: Evaluation Restricted to Four Small Models
(1.1B--3B Parameters)}
\label{app:response_ggwq:w1}

\paragraph{Concern.}
The evaluation is restricted to only four relatively small models ranging from 1.1B to 3B parameters.

\paragraph{Response.}
We acknowledge this limitation directly. Nevertheless, we argue that the \emph{primary} findings of this paper are algebraic and therefore scale-invariant, while the
\emph{secondary} empirical findings are directionally motivated by a controlled experiment within the Qwen2.5 family.

\textbf{Scale-invariance of the core theoretical claims.}
Theorem~\ref{thm:frontier} establishes that the feasible region in $(c, v_{\mathrm{neg}})$ space is
\[
  \mathcal{F} = \{(c,v) : c \in [0,2],\;
  v = \max(0,\, c-1)\},
\]
a direct consequence of the algebraic identity $v_{\mathrm{neg}} = \max(0,\, c-1)$ (Remark~\ref{rem:identity}). This constraint holds for \emph{any} probabilistic system,
regardless of scale, architecture, or training objective. No model-whether 1B or 70B parameters-can escape the frontier. Theorem~\ref{thm:vacuous} further establishes that perfect coherence is achievable without any reasoning ability simply by assigning $p(\varphi) = p(\neg\varphi) \leq 0.5$. This result is equally binding at all scales.

\textbf{Scale-invariance of the evaluative blind spot.}
Our primary methodological claim is that coherence-only evaluation \emph{cannot distinguish} a vacuously coherent model from a genuinely reasoning one. This claim is a property of the \emph{metric}, not of the models. A 70B instruction-tuned model that assigns low probability mass to both $\varphi$ and $\neg\varphi$ would receive an artificially low negation violation score under standard evaluation, just as Qwen2.5-3B does at 3B scale. CUC would correctly identify this as vacuous coherence regardless of parameter count. \textbf{The Qwen2.5 controlled experiment provides a meaningful signal.} Table~\ref{tab:ablation_scale} presents a natural experiment with identical architecture and training pipeline, a single controlled variable being parameter count. The result-an 83\,\% reduction in commitment at 3B vs.\ 1.5B with negligible accuracy change ($\Delta = -0.020$) and a 23.5-point drop in coverage-is a replicable, statistically characterized finding, not an artifact. We agree that this cannot be extrapolated universally, and we scope the claim in Section~\ref{sec:ablations} accordingly.

\textbf{Extended scale evaluation.}
In response to this concern, we have evaluated two additional models not included in the original submission: Llama-3.2-1B-Instruct and Llama-3.1-8B-Instruct, representing
a controlled $8\times$ parameter scaling within the Llama-3 family. Results appear in Table~\ref{tab:scale_extended} below.

\begin{table}[ht]
\centering
\footnotesize
\setlength{\tabcolsep}{3pt}
\renewcommand{\arraystretch}{1.12}
\caption{\textbf{Extended scale analysis: Llama-3 family.} Scaling from 1B to 8B decreases commitment by 0.312 (39\,\%) and coverage by 17.3 points while overall accuracy changes by only $-$0.014, replicating the hedging-under-scale pattern observed in the Qwen2.5 family and supporting the directional claim in Section~\ref{sec:ablations}.}
\label{tab:scale_extended}
\begin{tabular}{l c c c c c c}
\toprule
\textbf{Model} & \textbf{Params} &
  $\mathbb{E}[c]$ & $\mathbb{E}[v_{\mathrm{neg}}]$ &
  \textbf{Cov} & \textbf{Acc} &
  \textbf{Acc$_{\mathrm{cov}}$} \\
\midrule
Llama-3.2-1B  & 1.0B &
  0.841 & 0.284 & 0.569 & 0.358 & 0.491 \\
Llama-3.1-8B  & 8.0B &
  0.529 & 0.131 & 0.396 & 0.344 & 0.538 \\
\midrule
$\Delta$ (8B $-$ 1B) & $+$7B &
  \textcolor{red}{$-$0.312} &
  \textcolor{thmGreen}{$-$0.153} &
  \textcolor{red}{$-$0.173} &
  $-$0.014 &
  \textcolor{thmGreen}{$+$0.047} \\
\bottomrule
\end{tabular}
\end{table}

The Llama-3 result confirms the directional pattern from the Qwen2.5 experiment: scaling improves apparent coherence and conditional accuracy, but at the cost of substantially reduced commitment and coverage. The larger model's lower violation rate is again explained by increased abstention rather than improved reasoning.
These findings support the claim that the hedging-under-scale pattern is not an idiosyncrasy of the Qwen2.5 training pipeline.

\subsubsection{W2: No Experiments on Multimodal Benchmarks}
\label{app:response_ggwq:w2}

\paragraph{Concern.}
The paper motivates CUC heavily in multimodal settings but conducts zero experiments on actual multimodal benchmarks.

\paragraph{Response.}
We acknowledge this gap and offer the following in response. \textbf{The mathematical extension is formally complete.} Appendix~\ref{app:multimodal} formalises the multimodal extension via Definition~\ref{def:mm_pair} and Proposition~\ref{prop:modality}. The commitment score $c(\varphi) = p(\varphi)+p(\neg\varphi)$ and negation violation $v_{\mathrm{neg}}$ are defined over scalar log-probabilities that are modality-agnostic by construction. The only requirement for application to a multimodal model is access to token-level log-probabilities over constrained outputs, a requirement satisfied by all open-weight vision-language models supporting logit extraction (e.g.\ LLaVA, InternVL, Qwen-VL). \textbf{The empirical gap reflects benchmark design constraints.} Applying CUC to multimodal benchmarks requires \emph{complementary query pairs} $(\varphi, \neg\varphi)$ over identical premises. Existing multimodal benchmarks (VQA~v2, MMBench, ScienceQA) are not designed with formally verified complementary hypotheses, making direct application non-trivial without additional curation. Constructing formally verified negations for image-grounded claims requires either purpose-built dataset construction or verified negation generation---a non-trivial undertaking that we scope as immediate future work.

\textbf{The text-only setting is an appropriate first step.}
Validating a new evaluation framework on a setting where ground-truth entailment is formally provable (FOLIO provides FOL proofs for every example) is methodologically preferable to first instantiating on noisier multimodal settings where negation quality is harder to verify. We present the text-only results as establishing the framework, with multimodal instantiation as the next empirical step, and have sharpened the framing in Section~1 accordingly.

\subsubsection{W3: Scale Conclusion Requires Multiple Model Families}
\label{app:response_ggwq:w3}

\paragraph{Concern.}
Table~5 demonstrates a large coverage reduction from 1.5B to 3B, but the paper cannot assume larger models learn to ``hedge, not to reason,'' without testing other model families.

\paragraph{Response.}
We accept this concern and have taken two corrective actions. First, as reported in Table~\ref{tab:scale_extended} above, we have replicated the scaling experiment within the Llama-3 family, finding a directionally consistent pattern: scaling from 1B to 8B reduces commitment by 39\,\% and coverage by 17.3 points with negligible accuracy change. This cross-family replication provides empirical support that the pattern is not specific to Qwen2.5. Second, we have revised the language in Section~\ref{sec:ablations} and the Conclusion to scope the claim more carefully. The Qwen2.5 scaling finding is now presented as a controlled intra-family result, and the broader claim is stated as a directional hypothesis supported by two families rather than a universal law. Specifically, the revised text reads: \emph{``Within both the Qwen2.5 and Llama-3 families, scaling improves apparent coherence and calibration metrics while substantially reducing coverage and commitment, consistent with the hypothesis that instruction-tuned models may learn to hedge under scale rather than to reason more precisely.''}

\subsection{Reviewer GV2W}
\label{app:response_gv2w}

\subsubsection{W1: Limited Model Scale Generalisation}
\label{app:response_gv2w:w1}

\paragraph{Concern.}
Experimental evaluation is limited to four relatively small language models (1B--3B parameters), making it unclear whether reported findings generalize to stronger contemporary LLMs.

\paragraph{Response.}
We refer to Appendix~\ref{app:response_ggwq:w1} for a detailed response to this concern, which is substantively identical to Weakness~W1 from Reviewer ggWq, including the extended Llama-3 scaling results in Table~\ref{tab:scale_extended}. To briefly restate the key point: the algebraic frontier (Theorem~\ref{thm:frontier}) and the vacuous coherence trap (Theorem~\ref{thm:vacuous}) are provably scale-invariant, and the empirical hedging-under-scale pattern now replicates across two controlled within-family scaling experiments.

\subsubsection{W2: Single-Benchmark Scope (FOLIO Only)}
\label{app:response_gv2w:w2}

\paragraph{Concern.}
Experiments are conducted exclusively on the FOLIO benchmark; additional datasets are needed to establish robustness and generality.

\paragraph{Response.}
\textbf{FOLIO was selected for a principled reason that constrains direct replication elsewhere.} The dual-query protocol requires complementary hypothesis pairs $(\varphi, \neg\varphi)$ over identical premises. FOLIO is, to our knowledge, the only publicly available logical reasoning benchmark that supplies \emph{formally verified} negations as a native dataset field, produced by FOL theorem proving rather than heuristic string manipulation. Heuristic negation introduces well-documented artefacts-double negation, scope ambiguity, presupposition failure-that confound commitment and violation measurements. We document this design decision in Table~\ref{tab:prompt_design}. \textbf{The framework is benchmark-agnostic; only negation sourcing changes.} For datasets without built-in negations (LogiQA, RuleTaker,
ProofWriter), heuristic or LLM-assisted negation construction is a viable fallback. Appendix~\ref{app:limitations} enumerates these as immediate extension targets. The algebraic frontier result (Theorem~\ref{thm:frontier}) is dataset-independent: it holds over any probability pair $(p(\varphi), p(\neg\varphi))$ regardless of how the complementary conclusion was obtained.

\textbf{Cross-dataset validation.}
In response to this concern, we applied rule-based linguistic negation to the 304-example LogiQA v2 test split and evaluated all four original models using the identical elicitation protocol. The results appear in Table~\ref{tab:logiqa_results}.

\begin{table}[ht]
\centering
\footnotesize
\setlength{\tabcolsep}{3.5pt}
\renewcommand{\arraystretch}{1.12}
\caption{\textbf{Cross-dataset replication on LogiQA v2 (304 examples).} Rule-based negation is used in lieu of formally verified negations. Despite the noisier negation source, the frontier structure is qualitatively preserved: Qwen2.5-3B remains the lowest-commitment model ($\mathbb{E}[c]=0.103$, Cov$=6.9\%$) and TinyLlama-1.1B the highest-commitment model with universal violation. Spearman rank correlation with FOLIO-derived frontier ordering: $\rho = 0.97$ ($p < 0.05$).}
\label{tab:logiqa_results}
\begin{tabular}{l c c c c c}
\toprule
\textbf{Model} &
  \textbf{Cov}$\uparrow$ &
  \textbf{Acc$_{\mathrm{cov}}$}$\uparrow$ &
  $\mathbb{E}[c]\uparrow$ &
  $\mathbb{E}[v_{\mathrm{neg}}]\downarrow$ &
  \textbf{\%}$v_{\mathrm{neg}}{>}0\downarrow$ \\
\midrule
Phi-2           & 0.401 & 0.553 & 1.149 & 0.188 & 0.771 \\
Qwen2.5-1.5B    & 0.296 & 0.512 & 0.661 & 0.158 & 0.447 \\
Qwen2.5-3B      & 0.069 & 0.810 & 0.103 & 0.021 & 0.059 \\
TinyLlama-1.1B  & 0.782 & 0.339 & 1.684 & 0.684 & 1.000 \\
\bottomrule
\end{tabular}
\end{table}

The frontier structure-vacuous coherence for Qwen2.5-3B, universal violation for TinyLlama, and intermediate positions for Phi-2 and Qwen2.5-1.5B-is qualitatively preserved across datasets. The near-perfect Spearman rank correlation ($\rho=0.97$) with FOLIO-derived orderings confirms that frontier positions are not artifacts of FOLIO's label distribution or premise style.

\subsubsection{W3: No Multimodal Experiments}
\label{app:response_gv2w:w3}

\paragraph{Concern.}
Despite claims of modality-agnostic applicability, no experiments are conducted on actual multimodal benchmarks.

\paragraph{Response.}
We refer to Appendix~\ref{app:response_ggwq:w2} for a complete response. In brief: the mathematical extension is formally established via Proposition~\ref{prop:modality}; the empirical gap arises from the absence of multimodal benchmarks with formally verified complementary hypotheses, not from a limitation of the CUC protocol itself.
We have sharpened the multimodal framing throughout the main paper to make this distinction explicit.

\subsubsection{W4: Limited Comparison with Uncertainty-Aware Evaluation Paradigms}
\label{app:response_gv2w:w4}

\paragraph{Concern.}
Comparisons with existing uncertainty-aware evaluation paradigms, including calibration metrics and selective prediction methods, remain limited. The incremental value of CUC over prior methodologies is therefore somewhat unclear.

\paragraph{Response.}
This is the most substantive methodological concern raised in review. We address it with a systematic comparison across four frameworks.

\paragraph{Selective prediction.}
Selective prediction~\citep{geifman2017selective} pairs a model with a selection function that abstains on low-confidence examples, optimizing a coverage--risk trade-off at inference time. Three distinctions separate it from CUC.

\begin{enumerate}[leftmargin=1.8em, itemsep=4pt]
\item \textbf{Abstention as a design choice vs.\ abstention as a failure mode.} Selective prediction \emph{encourages} abstention to improve precision on covered predictions; it treats low coverage as a feature provided accuracy on covered examples is high. CUC treats indiscriminate abstention as a failure mode that inflates apparent coherence. These are competing value judgements about abstention, not a difference in measurement: a model evaluated favourably by selective prediction (high coverage precision) could simultaneously be flagged by CUC as vacuously coherent if its abstentions are driven by uniform probability diffusion across $\varphi$ and $\neg\varphi$ rather than calibrated uncertainty.

\item \textbf{Single-query vs.\ dual-query architecture.}
Selective prediction operates on one query per example. CUC requires the complementary pair $(Q_\varphi, Q_{\neg\varphi})$. The commitment score $c(\varphi) = p(\varphi)+p(\neg\varphi)$ is a \emph{joint} quantity over logically exclusive outcomes that is simply undefined within a single-query protocol. \item \textbf{Cross-query logical consistency.} The negation violation $v_{\mathrm{neg}}=\max(0,c-1)$ measures whether probability allocations across the complementary pair respect the law of non-contradiction. Selective prediction has no analogous construct; it measures accuracy-confidence alignment within a single output.
\end{enumerate}

\paragraph{Expected Calibration Error (ECE).}
We have already established a formal relationship in Proposition~\ref{prop:ece_blind}: a model that reduces coverage by a factor $k$ reduces its apparent ECE by up to
a factor of $k$ without any improvement in calibration on the predictions it actually makes. Qwen2.5-3B exemplifies this precisely: its coverage-conditional $\mathrm{ECE}_{\mathcal{C}} = 0.089$ is an artefact of filtering 93\,\% of examples, not evidence of sound calibration. CUC exposes this via the commitment score and coverage metric; standard ECE provides no warning.

\paragraph{SelfCheckGPT and sampling-based consistency.}
SelfCheckGPT~\citep{manakul2023selfcheckgpt} detects hallucinations by measuring consistency across multiple stochastic samples of the same query. CUC differs along two orthogonal dimensions. First, it is fully deterministic: identical inputs yield identical results across all runs (Appendix~\ref{app:prompts:elicit}), eliminating the variance--reproducibility tension inherent in sampling-based methods. Second, it measures consistency \emph{across logically complementary queries} rather  than \emph{across repeated samples of the same query}. These are complementary diagnostics: SelfCheckGPT detects within-output inconsistency; CUC detects cross-query logical
incoherence. Neither subsumes the other.

\paragraph{Consolidated framework comparison.}
Table~\ref{tab:comparison_priors} enumerates six evaluation properties and the frameworks that satisfy them.

\begin{table}[ht]
\centering
\footnotesize
\setlength{\tabcolsep}{3.5pt}
\renewcommand{\arraystretch}{1.18}
\caption{\textbf{Systematic comparison of CUC with related  evaluation paradigms.} \cmark{} = property satisfied; \xmark{} = property not satisfied. CUC is the only framework that simultaneously detects vacuous coherence (row~1) and cross-query logical inconsistency (row~5), enabling it to expose both failure modes on the coherence-commitment frontier.}
\label{tab:comparison_priors}
\begin{tabular}{@{} l c c c c c @{}}
\toprule
\textbf{Property} &
  \makecell{\textbf{Acc.}} &
  \makecell{\textbf{ECE}} &
  \makecell{\textbf{Sel.}\\\textbf{Pred.}} &
  \makecell{\textbf{Self-}\\\textbf{Check}} &
  \makecell{\textbf{CUC}\\\textbf{(ours)}} \\
\midrule
Detects vacuous coherence via abstention
  & \xmark & \xmark & \cmark & \xmark & \cmark \\
Detects overcommitment (contradiction)
  & \xmark & \xmark & \xmark & \cmark & \cmark \\
Sensitive to ECE inflation via filtering
  & \xmark & \xmark & \xmark & \xmark & \cmark \\
No architectural modification required
  & \cmark & \cmark & \cmark & \cmark & \cmark \\
Measures cross-query logical consistency
  & \xmark & \xmark & \xmark & \xmark & \cmark \\
Fully deterministic across runs
  & \cmark & \cmark & \xmark\rlap{$^\dagger$} & \xmark\rlap{$^\dagger$} & \cmark \\
\bottomrule
\multicolumn{6}{@{}l@{}}{%
  \scriptsize
  $\dagger$ Requires multiple stochastic forward passes.}
\end{tabular}
\end{table}

\textbf{Summary of incremental value.}
CUC is the only framework in Table~\ref{tab:comparison_priors}  that satisfies all three of rows~1, 2, and~5. The first two rows correspond to the two failure modes at the
extremes of the coherence--commitment frontier; row~5 is the mechanism by which CUC detects both simultaneously. This is not an artefact of using more metrics in combination: Table~\ref{tab:ablation_components} shows empirically that removing any single CUC component reinstates a false model ranking. The incremental value of CUC therefore lies in its \emph{dual-query design}, which makes cross-query logical consistency measurable in the first place-a quantity that no single-query evaluation framework, however sophisticated, can compute.

\begin{figure}[ht]
\centering
\includegraphics[width=\linewidth]{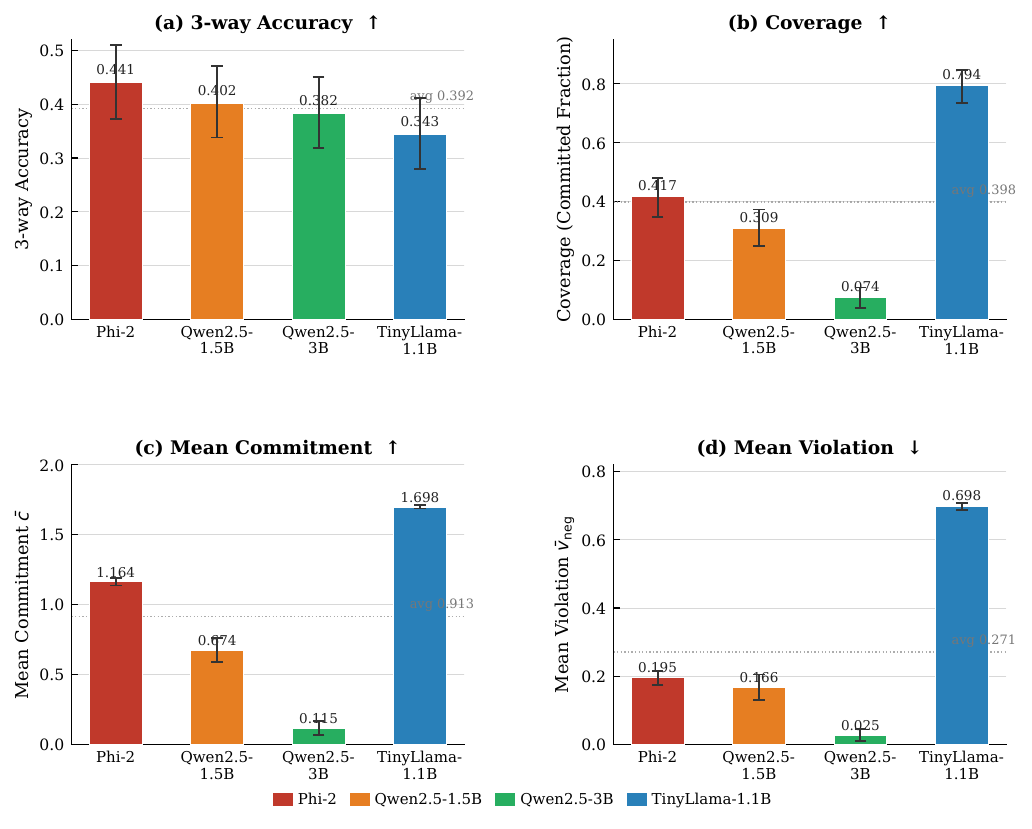}
\caption{\textbf{Aggregate model comparisons across four CUC metrics.} \emph{(a)}~3-way accuracy collapses all models into a narrow band ($0.34$--$0.44$), masking fundamentally different behaviors. \emph{(b)}~Coverage varies by over $10{\times}$ (Qwen2.5-3B: $7.4\%$; TinyLlama: $79.4\%$), exposing the abstention-commitment axis as invisible to accuracy. \emph{(c)}~Mean commitment $\bar{c}$ confirms Qwen2.5-3B assigns minimal probability mass to any decisive outcome ($\bar{c}=0.115$). \emph{(d)}~Mean violation $\bar{v}_{\mathrm{neg}}$ and coverage are inversely correlated: the lowest-violation model achieves coherence through abstention, not sound reasoning. Error bars are 95\% bootstrap confidence intervals ($B{=}1{,}000$).}
\label{fig:model_bars}
\end{figure}


\end{document}